\documentclass{article} 
\usepackage{IJCNN2023_conference,times}


\usepackage{amsmath,amsfonts,bm}









\def\eqref#1{equation~\ref{#1}}









\def\1{\bm{1}}










\DeclareMathAlphabet{\mathsfit}{\encodingdefault}{\sfdefault}{m}{sl}
\SetMathAlphabet{\mathsfit}{bold}{\encodingdefault}{\sfdefault}{bx}{n}













\usepackage[utf8]{inputenc} 
\usepackage[T1]{fontenc}    
\usepackage{hyperref}       
\usepackage{url}            
\usepackage{booktabs}       
\usepackage{amsfonts}       
\usepackage{nicefrac}       
\usepackage{microtype}

\usepackage{url}
\usepackage{bm}
\usepackage{amsmath}
\usepackage{amsthm}
\usepackage{amssymb}
\usepackage{lipsum}
\usepackage{pifont}
\usepackage{graphicx}
\usepackage{subcaption}
\usepackage{caption}
\usepackage{varwidth}
\usepackage{centernot}
\usepackage{todonotes}
\usepackage{mathtools}
\usepackage{algorithm}
\usepackage[noend]{algpseudocode}

\usepackage{multicol}
\setlength{\columnsep}{4cm}
\setlength\columnwidth{10cm}
\usepackage{multicol}
\usepackage{lipsum}
\usepackage[font=small,labelfont=bf]{caption}
\usepackage[normalem]{ulem} 
\usepackage{natbib}
\newtheorem{lemma}{Lemma}
\newtheorem{definition}{Definition}
\newtheorem{remark}{Remark}

\captionsetup{belowskip=-10pt}

\usepackage{float}
\usepackage{lscape}

\title{ \Large The Backpropagation algorithm for a math student}

\iclrfinalcopy

\author{\textbf{Saeed Damadi\textsuperscript{1}}, \textbf{Golnaz Moharrer\textsuperscript{2}}, \textbf{Mostafa Cham\textsuperscript{2}}, \textbf{Jinglai Shen\textsuperscript{1}} \\
  $^1 \text{Department of Mathematics and Statistics}$, $^2 \text{Department of Information Systems}$
  \\
  University of Maryland, Baltimore County (UMBC)\\
  Baltimore, MD 21250 \\
  \texttt{sdamadi1, golnazm1, mcham2, shenj@umbc.edu}
}


%


\begin{document}

\maketitle

\begin{abstract}
A Deep Neural Network (DNN) is a composite function of vector-valued functions, and in order to train a DNN, it is necessary to calculate the gradient of the loss function with respect to all parameters. This calculation can be a non-trivial task because the loss function of a DNN is a composition of several nonlinear functions, each with numerous parameters.
The Backpropagation (BP) algorithm leverages the composite structure of the DNN to efficiently compute the gradient. As a result,  
the number of layers in the
network does not significantly impact the complexity of the calculation.
The objective of this paper is to express the gradient of the loss function in terms of a matrix multiplication using the Jacobian operator. This can be achieved by considering the total derivative of each layer with respect to its parameters and expressing it as a Jacobian matrix. The gradient can then be represented as the matrix product of these Jacobian matrices. This approach is valid because the chain rule can be applied to a composition of vector-valued functions, and the use of Jacobian matrices allows for the incorporation of multiple inputs and outputs. 
By providing concise mathematical justifications, the results can be made understandable and useful to a broad audience from various disciplines.
\end{abstract}

\section{Introduction}
Understanding the process of training a (Deep) Neural Networks (D)NNs, as illustrated in Fig \ref{fig:training}, is not straightforward because it involves many detailed parts. Additionally, modern and sophisticated DNNs are often provided as pre-trained models in Python packages such as Pytorch \cite{paszke2019pytorch} and TensorFlow \cite{abadi2016tensorflow}, which can make it difficult for users to fully understand the training process. 
These packages abstract away many of the implementation details, making it more accessible for users to use these models for their own tasks, but also making it less transparent for the user to understand the inner workings of the model.

\begin{figure}[b]
    \centering 
    \includegraphics[scale=0.3]{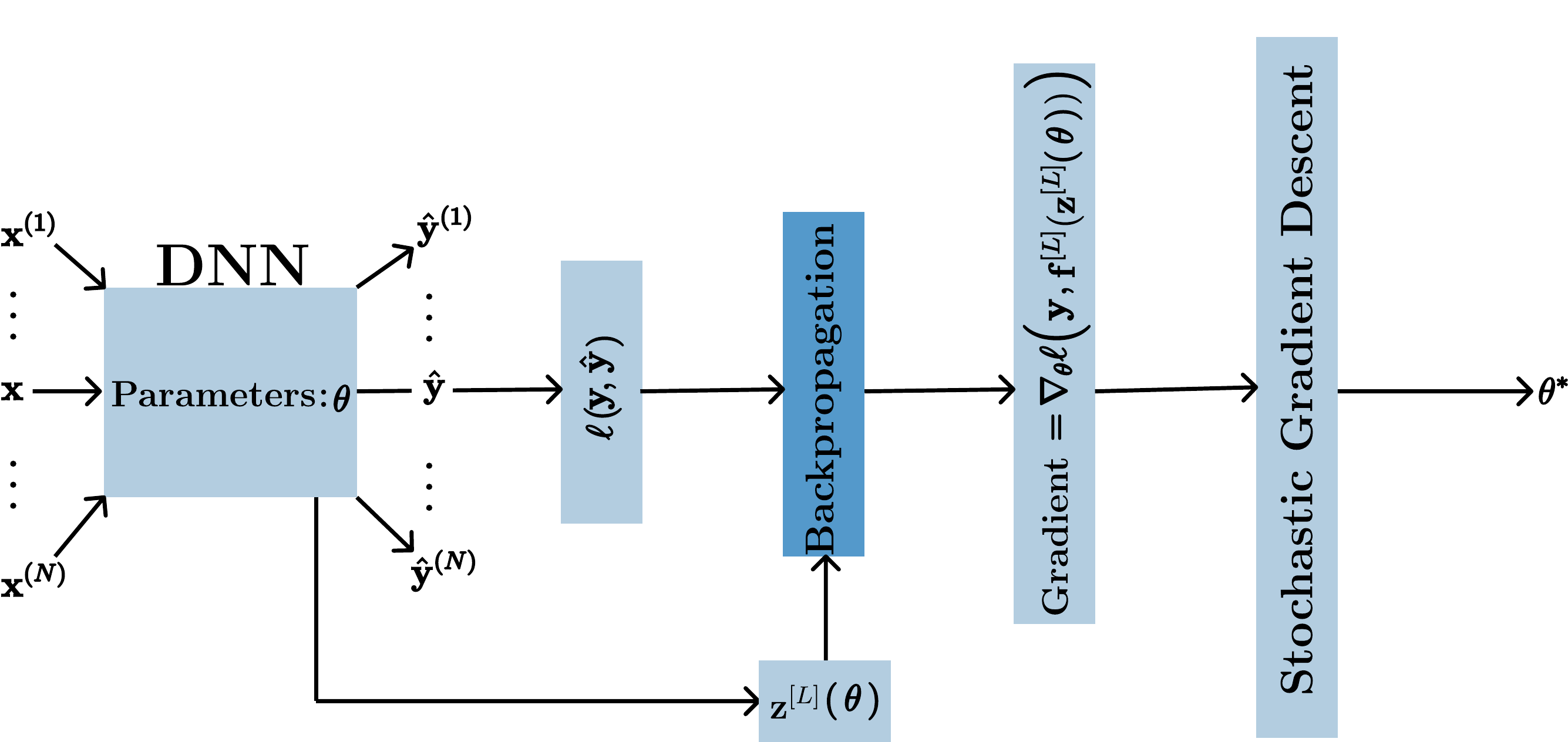}
    \caption{Relationship between training of a NN and BP algorithm}
    \label{fig:training}
\end{figure}


These pre-trained off-the-shelf models can solve variety of tasks such as computer vision or language processing. Convolutional neural networks (CNNs) are currently the most widely used architecture for image classification and other computer vision tasks. Some examples of successful CNN architectures for image classification include ResNet \cite{he2016deep}, Inception \cite{szegedy2015going}, DenseNet \cite{huang2017densely}, and EfficientNet \cite{tan2019efficientnet}. CNNs models are capable of solving image segmentation and object detection which can be done using U-Net \cite{ronneberger2015u} 
and YOLO \cite{redmon2016you}.
For natural language processing tasks, transformer models like BERT \cite{devlin2018bert}, GPT-3 \cite{brown2020language}, and T5 \cite{raffel2020exploring} have achieved state-of-the-art performance on many benchmarks.

Fig \ref{fig:training} shows the training process that utilizes the Stochastic Gradient Descent (SGD) algorithm \cite{robbins1951stochastic} to minimize the loss function of a DNN. As its name suggests, the SGD algorithm requires calculating the gradient\footnote{Please refer Def. \ref{def:gradient} in Appendix for the definition of the gradient of a scalar-valued function.}
of the loss function of a DNN.
All different variants of the SGD algorithm require calculating at least a single gradient associated with a single sample, i.e., $\mathbf{x}$ in Fig. \ref{fig:training}. 
This calculation is a non-trivial task because the loss function of a DNN is a composition of several nonlinear vector-valued functions where each one has numerous parameters. The Backpropagation (BP) algorithm, introduced by \cite{rumelhart1986learning} is an efficient way to calculate the gradient of the loss function of a DNN. This algorithm leverages the composite structure of a DNN to efficiently calculate the gradient of the loss function with respect to the model's parameters, i.e., $\bm{\theta}$ in Fig. \ref{fig:training}. 

\begin{algorithm*}[t]
\caption{The backpropagation algorithm}
\label{alg:backprop}

\begin{algorithmic}[1]

\Require 

Given an $L$-layer DNN or NN with a loss function $\ell$, and a data pair $(\mathbf{x}, \mathbf{y})$. Let $\mathbf{a}^{[0]}:=\mathbf{x}$.
\State

Calculate 
$
\nabla_{\mathbf{z}^{[L]}}
\ell
\big(
\mathbf{y},
\mathbf{f}
(
\mathbf{z}^{[L]}
)\big)$ from Tab. \ref{tab:activaionllosscombination}.
\For{$l=1, \dots, L$}
\If{$l\neq L$}

\begingroup\makeatletter\def\f@size{6}\check@mathfonts
$$
\mathbf{J}_{\mathbf{W}^{[l]}, \mathbf{b}^{[l]}} 
 \big(
\mathbf{z}^{[l]}
\big)
=
\big(
\mathbf{W}^{[L]}
\big)^{\top}
\mathbf{J}_{\mathbf{W}^{[L-1]}, \mathbf{b}^{[L-1]}}
\Big(
\mathbf{f}^{[L-1]}(\mathbf{z}^{[L-1]})
\Big)
\cdots
\mathbf{J}_{\mathbf{W}^{[l]}, \mathbf{b}^{[l]}}
\Big(
\mathbf{f}^{[l]}(\mathbf{z}^{[l]})
\Big)
\begin{bmatrix}
\big(
\mathbf{a}^{[l]}
\big)
^{\top} & 0 & 0 & 
\\
0 & \ddots & 0 & I
\\
0 & 0 & \big(
\mathbf{a}^{[l]}
\big)
^{\top} & 
\end{bmatrix}
$$

\endgroup
\ElsIf{$l = L$}

$$
\mathbf{J}_{\mathbf{W}^{[L]}, \mathbf{b}^{[L]}} 
 \big(
\mathbf{z}^{[L]}
\big)
=
\begin{bmatrix}
\big(
\mathbf{a}^{[L]}
\big)
^{\top} & 0 & 0 & 
\\
0 & \ddots & 0 & \mathbf{I}
\\
0 & 0 & \big(
\mathbf{a}^{[L]}
\big)
^{\top} & 
\end{bmatrix}
$$

\EndIf
\EndFor
\State
Construct 

$$
\begin{aligned}
 \mathbf{J}_{\bm{\theta}}
\mathbf{z}^{[L]}
(\bm{\theta})
&=
\begin{bmatrix}
 \mathbf{J}_{\mathbf{W}^{[1]}, \mathbf{b}^{[1]}} 
 \Big(
\mathbf{z}^{[L]}
(\bm{\theta})
\Big)
\cdots
 \mathbf{J}_{\mathbf{W}^{[L]}, \mathbf{b}^{[L]}} 
 \Big(
\mathbf{z}^{[L]}
(\bm{\theta})
\Big)
\end{bmatrix}
\end{aligned}.
$$

\State Calculate the gradient

$$
\nabla_{\bm{\theta}}
\ell
\big(
\mathbf{y}, \hat{\mathbf{y}}
(\bm{\theta})
\big)
= 
\big(
\mathbf{J}_{\bm{\theta}}
\mathbf{z}^{[L]}(\bm{\theta})
\big)^{\top}
\nabla_{\mathbf{z}^{[L]}}
\ell
\big(
\mathbf{y},
\mathbf{f}
(
\mathbf{z}^{[L]}
)\big)
.
$$

\end{algorithmic}

\end{algorithm*}


In this paper we are going to calculate the gradient of the loss function of a DNN associated with a single sample. The gradient will be derived as the matrix multiplication of Jacobian matrices \footnote{Please refer Def. \ref{def:jacobian} in Appendix for the definition of a Jacobian matrix of a vector-valued function.}. The derivation will be done by considering the total derivative of each layer with respect to its parameters and expressing it as a Jacobian matrix. The gradient can then be represented as the matrix product of these Jacobian matrices. This approach is well-founded because the chain rule is valid for the Jacobian operator. Hence,  the Jacobian operator 
can be applied to a composition of vector-valued functions.
We provide concise mathematical justifications so the results can be made understandable and useful to a broad audience from various disciplines, even those without a deep understanding of the mathematics involved. This is particularly important when communicating complex technical concepts to non-experts, as it allows for a clear and accurate understanding of the results. Additionally, using mathematical notation allows for precise and unambiguous statements of results, which can facilitate replication and further research in the field.

Our results is summarized in Alg. \ref{alg:backprop} for $L$ number of layers. 
As Alg. \ref{alg:backprop} shows the matrix multiplication is done iteratively. The iterative nature comes from the fact that the loss function of a DNN is defined as a composition of $L+1$ functions where the last one $\ell$ is the final function which measures the loss (error) of the prediction and the actual value, i.e., $\ell(\mathbf{y}, \hat{\mathbf{y}})$ in Fig \ref{fig:training} where $\hat{\mathbf{y}}$ is the prediction and $\mathbf{y}$ the actual value.

The algorithm presented in Alg. \ref{alg:backprop} is explained and justified by calculating the gradient of the loss function for networks with one, two, and three layers. These calculations provide insights into the gradients of loss functions for generic neural networks and demonstrate how the gradient of a single-layer network can serve as a model for the last layer of any DNN. Additionally, the calculation of a two-layer network is used to extend the calculation beyond two-layer networks, as seen in the calculation of the gradient of the loss function for LeNet-100-300 \cite{lecun1998gradient}, which is a three-layer network. Finally, we show how convolutional layers can be converted to linear layers in order to calculate their Jacobian matrices. These results can be used to calculate the gradient of the loss function of a CNN.

\section{Notation}

The letters $x$, $\mathbf{x}$, and $\mathbf{W}$ denote a scalar, vector, and matrix, respectively. The letter $\mathbf{I}$ represents the identity matrix. The $i$-th element of a vector $\mathbf{x}$ is denoted by $x_i$. Likewise, $w_{ij}$ denotes the $ij$-th element of a matrix $\mathbf{W}$ located at the $i$-th row and $j$-th column (sometimes written as $w_{i,j}$ for clarity). Also, $\mathbf{W}{i\bullet}$ and $\mathbf{W}{\bullet j}$ denote the $i$-th row and the $j$-th column of the matrix, respectively.
The vector form of a matrix $\mathbf{W}$ is denoted by $\text{Vec}(\mathbf{W})$, where each column of $\mathbf{W}$ is stacked on top of each other, with the first column at the top.
The letter $\mathbf{f}$ is reserved for a vector-valued non-linear activation function of a layer in a DNN (NN), where $\mathbf{z}$ and $\mathbf{a}$ are its input and output, respectively, i.e., $\mathbf{a}=\mathbf{f}(\mathbf{z})$. The letter $\mathbf{x}$ is reserved for the input to a DNN (NN), and $y$ or $\mathbf{y}$ are reserved for the scalar or vector label of the input $\mathbf{x}$. The predictions of a DNN (NN) associated with $y$ or $\mathbf{y}$ are denoted by $\hat{y}$ or $\hat{\mathbf{y}}$, respectively.
Superscripted index inside square brackets denotes the layer of a DNN (NN), e.g., $\mathbf{z}^{[l]}$ is the $\mathbf{z}$ vector corresponding to the $l$-th layer.

\section{Result}

As we have explained earlier, the goal of this paper is to take the first step towards training a DNN, which involves calculating the gradient of a loss function with respect to all parameters of the DNN, i.e., $\nabla_{\bm{\theta}} \ell(\mathbf{y}, \hat{\mathbf{y}}(\bm{\theta}))$, where $\bm{\theta}$ is the vector of parameters, $\hat{\mathbf{y}}$ is the predicted value by the network, $\mathbf{y}$ is the true value, and $\ell(\mathbf{y}, \hat{\mathbf{y}}(\bm{\theta}))$ is the loss incurred to predict the output.
To achieve this goal, two important observations can make the task easier. First, it involves separating the last-layer activation function from the network. Second, it involves using the relationship between the Jacobian operator and the gradient.
\begin{figure}[b]
    \centering 
    \includegraphics[scale=0.3]{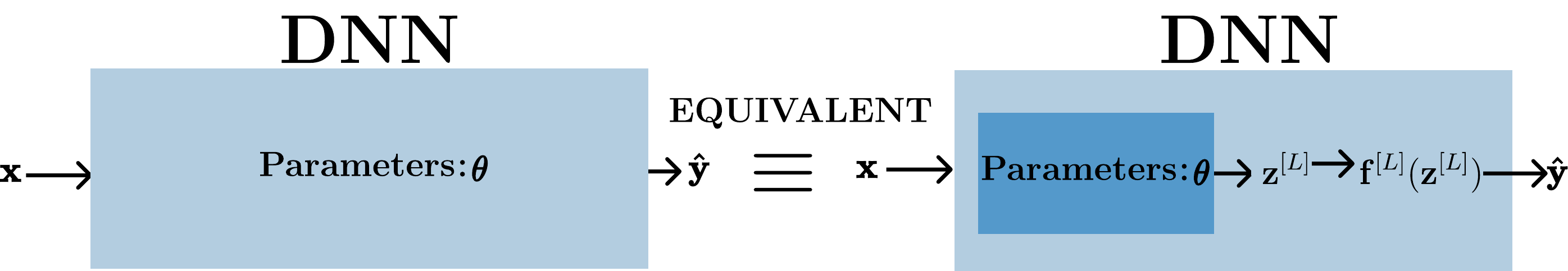}
    \caption{Separating the last layer activation from a DNN}
    \label{fig:separatedLastLayer}
\end{figure}

To fulfill the first step, we observe the following equality:
$$
\ell
\big(
\mathbf{y}, \hat{\mathbf{y}}
(\bm{\theta})
\big)
=
\ell
\Big(
\mathbf{y},
\mathbf{f}^{[L]}
\big(
\mathbf{z}^{[L]}
(\bm{\theta})
\big)
\Big)
$$
which is a consequence of the fact that $\hat{\mathbf{y}}=\mathbf{f}^{[L]}\big(\mathbf{z}^{[L]}(\bm{\theta})\big)$, where $\mathbf{f}^{[L]}$ is the last activation layer and $\mathbf{z}^{[L]}(\bm{\theta})$ is its corresponding input. This equality can be illustrated more clearly as shown in Fig. \ref{fig:separatedLastLayer}.

Second observation utilizes the relationship between the Jacobian operator and the gradient of a scalar-valued function stated in the following lemma.

\begin{lemma}[Jacobian and gradient of a scalar-valued function]
For $f:\mathbb{R}^n\to\mathbb{R}$ as a scalar-valued differentiable function   
$
\nabla_{\mathbf{x}} f(\mathbf{x})
=
\Big(
\mathbf{J}_{\mathbf{x}}
f(\mathbf{x})
\Big)
^{\top}
$
where $\nabla_{\mathbf{x}} f(\mathbf{x})$
is the gradient and $\mathbf{J}_{\mathbf{x}}
f(\mathbf{x})$ is the Jacobian matrix of $f$
at point $\mathbf{x} \in \mathbb{R}^n$ respectively.
\end{lemma}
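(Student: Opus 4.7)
The plan is to prove the lemma directly from the definitions of the gradient and the Jacobian, which are referenced in the paper as Def.~\ref{def:gradient} and Def.~\ref{def:jacobian} in the Appendix. Since $f$ is scalar-valued, the claim is essentially bookkeeping: the gradient is the column vector of partial derivatives, while the Jacobian of a scalar-valued function reduces to a single row of partial derivatives, so the two objects differ only by a transpose.

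First I would fix $\mathbf{x} \in \mathbb{R}^n$ and invoke the differentiability of $f$ to conclude that all partial derivatives $\partial f / \partial x_j$, $j=1,\dots,n$, exist at $\mathbf{x}$. Then I would unpack the gradient as the $n \times 1$ column vector
$$
\nabla_{\mathbf{x}} f(\mathbf{x}) \;=\; \begin{bmatrix} \dfrac{\partial f}{\partial x_1}(\mathbf{x}) \\[4pt] \vdots \\[4pt] \dfrac{\partial f}{\partial x_n}(\mathbf{x}) \end{bmatrix}.
$$

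Next I would apply the general definition of the Jacobian matrix to the special case of a scalar-valued map. For a vector-valued function $\mathbf{g}:\mathbb{R}^n \to \mathbb{R}^m$ the Jacobian $\mathbf{J}_{\mathbf{x}}\mathbf{g}(\mathbf{x})$ is the $m \times n$ matrix whose $(i,j)$ entry is $\partial g_i / \partial x_j$. Specializing to $m=1$ and $\mathbf{g}=f$, the Jacobian collapses to the $1 \times n$ row vector
$$
\mathbf{J}_{\mathbf{x}} f(\mathbf{x}) \;=\; \begin{bmatrix} \dfrac{\partial f}{\partial x_1}(\mathbf{x}) & \cdots & \dfrac{\partial f}{\partial x_n}(\mathbf{x}) \end{bmatrix}.
$$
Transposing this row vector yields exactly the column vector displayed for $\nabla_{\mathbf{x}} f(\mathbf{x})$, which establishes the identity.

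There is no real obstacle here; the only thing to be careful about is a consistent column-vector convention for the gradient versus the row-vector convention inherited from the general Jacobian definition, so that the transpose in the statement lands on the correct side. Once the two objects are written out entry by entry the conclusion is immediate, which is why the lemma is presented as a bridge between the gradient-based view of optimization and the Jacobian-based chain rule that drives the rest of the backpropagation derivation.
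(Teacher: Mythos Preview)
Your proposal is correct and follows exactly the paper's approach: the paper's proof simply states that the equality follows from the definitions of the Jacobian (Def.~\ref{def:jacobian}) and the gradient (Def.~\ref{def:gradient}), and your argument does precisely this, only unpacking those definitions explicitly entry by entry.
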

\begin{proof}
The equality follows from the definitions of a Jacobian matrix as defined in Def. \ref{def:jacobian} and the gradient of a scalar-valued function as defined in Def. in \ref{def:gradient} in Appendix. 
\end{proof}
By using the second observation as $\nabla_{\mathbf{x}} f(\mathbf{x})
=
\Big(
\mathbf{J}_{\mathbf{x}}
f(\mathbf{x})
\Big)
^{\top},$ and making use of the first one as 
$$\ell
\big(
\mathbf{y}, \hat{\mathbf{y}}
(\bm{\theta})
\big)
=
\ell
\Big(
\mathbf{y},
\mathbf{f}^{[L]}
\big(
\mathbf{z}^{[L]}
(\bm{\theta})
\big)
\Big),
$$
one can write the following:
\begin{equation}\label{eq:GradientJacobian}
\begin{aligned}
\nabla_{\bm{\theta}}
\ell
\big(
\mathbf{y}, \hat{\mathbf{y}}
(\bm{\theta})
\big)
&=
\Big(
\mathbf{J}_{\bm{\theta}}
\ell
\big(
\mathbf{y}, \hat{\mathbf{y}}
(\bm{\theta})
\big)
\Big)^{\top}
\\
&=
\Bigg(
\mathbf{J}_{\bm{\theta}}
\ell
\Big(
\mathbf{y},
\mathbf{f}^{[L]}
\big(
\mathbf{z}^{[L]}
(\bm{\theta})
\big)
\Big)
\Bigg)^{\top}
\\
&=
\Bigg(
\mathbf{J}_{\mathbf{z}^{[L]}}
\ell
\Big(
\mathbf{y},
\mathbf{f}^{[L]}
\big(
\mathbf{z}^{[L]}
\big)
\Big)
\mathbf{J}_{\bm{\theta}}
\mathbf{z}^{[L]}
(\bm{\theta})
\Bigg)^{\top}
\\
&=
\Bigg(
\mathbf{J}_{\bm{\theta}}
\mathbf{z}^{[L]}
(\bm{\theta})
\Bigg)^{\top}
\Bigg(
\mathbf{J}_{\mathbf{z}^{[L]}}
\ell
\Big(
\mathbf{y},
\mathbf{f}^{[L]}
\big(
\mathbf{z}^{[L]}
\big)
\Big)
\Bigg)^{\top}
\\
&=
\Bigg(
\mathbf{J}_{\bm{\theta}}
\mathbf{z}^{[L]}
(\bm{\theta})
\Bigg)^{\top}
\nabla_{\mathbf{z}^{[L]}}
\ell
\Big(
\mathbf{y},
\mathbf{f}^{[L]}
\big(
\mathbf{z}^{[L]}
\big)
\Big)
\end{aligned}
\end{equation}
The advantage of Equation (\ref{eq:GradientJacobian}) is that it separates the original gradient calculation into two separate calculations, i.e., $\mathbf{J}_{\bm{\theta}}\mathbf{z}^{[L]}(\bm{\theta})$ and $\nabla_{\mathbf{z}^{[L]}}\ell\Big(\mathbf{y},\mathbf{f}^{[L]}\big(\mathbf{z}^{[L]}\big)\Big)$. 

The calculation of the second term is straightforward because the choice of a loss function and the last activation function in a DNN are not arbitrary. This is illustrated in Fig. \ref{fig:activaionllosscombination}, which shows three common combinations, each associated with a different problem.

\begin{figure}[t]
    \centering 
\includegraphics[scale=0.4]{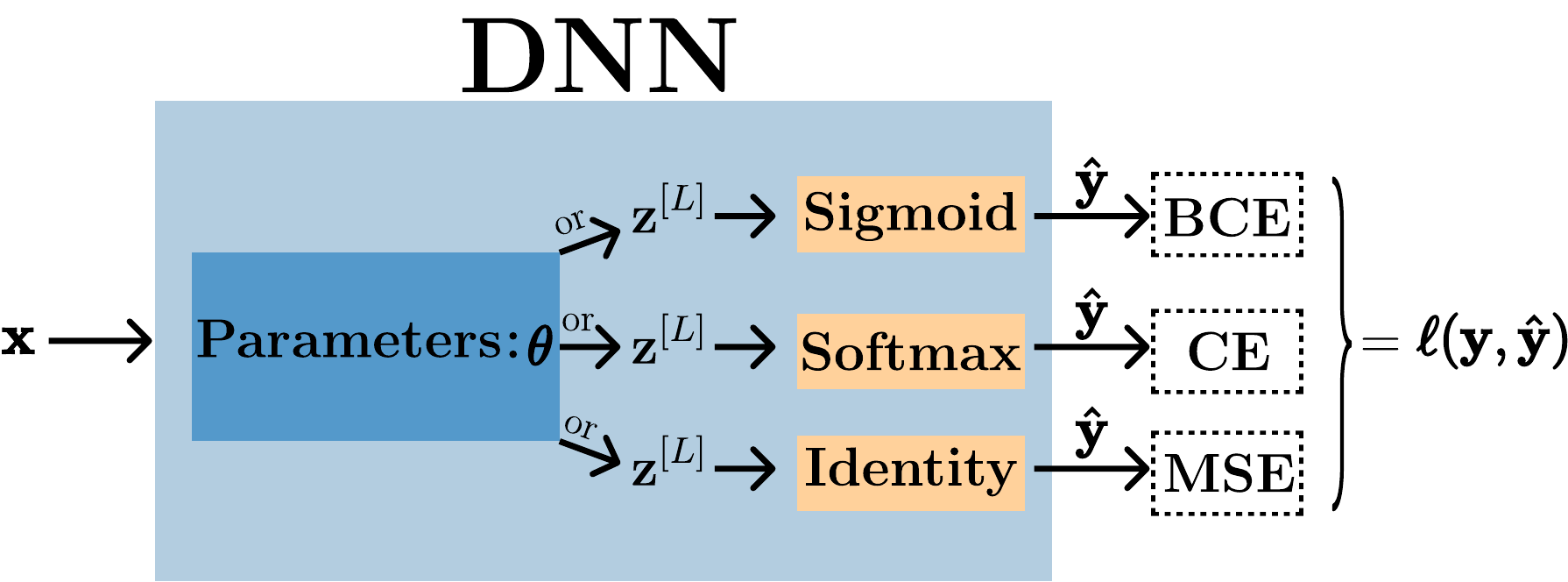}
    \caption{Last-Layer activation function combined with its associated loss}
    \label{fig:activaionllosscombination}
\end{figure}

In a binary classification problem, a sigmoid function is used as the last activation function together with Binary Cross Entropy (BCE) loss, as defined in Definition \ref{def:sigmoid} and Definition \ref{def:BCE} in the Appendix, respectively. Similarly, in a classification problem with more than two classes, a softmax function is used as the last activation function together with Cross Entropy (CE) loss, as defined in Definition \ref{def:softmax} and Definition \ref{def:CE} in the Appendix, respectively. In a regression problem, where the goal is to predict a continuous value, the last-layer activation function is an identity function, and Square Error (SE) is used as the loss function, as defined in Definition \ref{def:SE}. This means that the predicted output $\hat{\mathbf{y}}$ is equal to the final layer's output $\mathbf{z}^{[L]}$. Table \ref{tab:activaionllosscombination} shows the combination of the last activation function with its corresponding loss and the expression for $\nabla_{\mathbf{z}^{[L]}}\ell\Big(\mathbf{y},\mathbf{f}^{[L]}\big(\mathbf{z}^{[L]}\big)\Big)$. For the derivation, please refer to Appendix \ref{append:grad}.

More work is required to calculate the second term, i.e., $\mathbf{J}_{\bm{\theta}}\mathbf{z}^{[L]}(\bm{\theta})$. In the following subsections, we show how to derive this calculation concisely for any number of layers.

\subsection{Gradient of a one-layer network}
We will now focus on computing $\mathbf{J}_{\bm{\theta}}\mathbf{z}^{[L]}(\bm{\theta})$. This calculation can be facilitated by starting with a single layer neural network that is capable of solving a classification problem. The single layer network plays a crucial role in gradient computation as it can be considered as the last layer of any deep neural network (DNN). Due to the presence of only one layer, the superscripts in $\mathbf{z}^{[L]}$ and $\mathbf{f}^{[L]}$ can be omitted, giving us $\mathbf{J}_{\bm{\theta}}\mathbf{z}(\bm{\theta}) := \mathbf{J}_{\bm{\theta}}\mathbf{z}^{[L]}(\bm{\theta})$. The network depicted in Fig. \ref{fig:onelayernetwork} is designed to perform three-class classification based on inputs with four features. Consequently, the weight matrix $\mathbf{W}$ is a $4 \times 3$ and the bias vector $\mathbf{b}$ is a $3 \times 1$, i.e., $\mathbf{W} \in \mathbb{R}^{4 \times 3}$ and $\mathbf{b} \in \mathbb{R}^{3 \times 1}$. As shown in the concise representation of the network in the bottom part of Fig. \ref{fig:onelayernetwork}, 
$\mathbf{z} \in \mathbb{R}^3$, i.e.,
$\mathbf{z}=\big(\mathbf{W}
\big)^{\top}\mathbf{x}
+
\mathbf{b}$. This vector $\mathbf{z}$ is a vector-valued function including $15$ parameters where these parameters are all elements in $\mathbf{W}$ and $\mathbf{b}$, i.e., $15=4\times 3+3\times 1$. According to the definition of Jacobian matrix as defined in Def. \ref{def:jacobian} in Appendix, $\mathbf{J}_{\mathbf{W}, \mathbf{b}} 
\big(
\mathbf{z}(\mathbf{W}, \mathbf{b})
\big)$ is a $3 \times 15$ matrix, i.e., 
$
\mathbf{J}_{\mathbf{W}, \mathbf{b}} 
\big(
\mathbf{z}(\mathbf{W}, \mathbf{b})
\big)
\in \mathbb{R}^{3} \times \mathbb{R}^{15}
$
.

\begin{table*}[h]
\centering
\caption{Last-Layer activation function combined with its associated loss}
\label{tab:activaionllosscombination}
\scalebox{1}{
\begin{tabular}{@{}l *{3}{c}@{}}
\hline\toprule
Loss  & $\ell(\mathbf{y}, \hat{\mathbf{y}})$  
& $
\nabla_{\mathbf{z}^{[L]}}
\ell
\Big(
\mathbf{y},
\hat{\mathbf{y}}
\Big)
=
\nabla_{\mathbf{z}^{[L]}}
\ell
\Big(
\mathbf{y},
\mathbf{f}^{[L]}
\big(
\mathbf{z}^{[L]}
\big)
\Big)
$
 & Proof
\\\midrule
BCE    
& 
$-y \log(\hat{y}) - (1 - y)\log(1 - \hat{y})$
&
$-(y - \hat{y}) \quad (y, \hat{y} \in \mathbb{R})$ 
&
Appendix \ref{subappend:gradBCE}
\\\midrule
CE
& $-\sum_{i=1}^c y_i \log(\hat{y}_i) $
&
$-(\mathbf{y} - \hat{\mathbf{y}}) \quad (\mathbf{y}, \hat{\mathbf{y}}) \in \mathbb{R}^c$
&
Appendix \ref{subappend:gradCE}
\\\midrule
SE 
& 
$\|\mathbf{y}-\hat{\mathbf{y}}\|^2 $     & $-2(\mathbf{y}-\hat{\mathbf{y}}) \quad (\mathbf{y}, \hat{\mathbf{y}} \in \mathbb{R}^m)$  & Appendix \ref{subappend:gradMSE}

\\
\bottomrule\hline
\end{tabular}
}
\end{table*}


For notational simplicity, all the parameters are denoted by  $\bm{\theta}$ which is a vector in $\mathbb{R}^{15}$, and is constructed by the process of vectorization. The  vectorization process stacks each column of $\mathbf{W}$ on top of each other, with the first column on the top to create $\text{Vec}(\mathbf{W})$. Then, the vector of network parameters can be written as $
\bm{\theta}^{\top}:=
\begin{bmatrix}
\big(\text{Vec}(\mathbf{W})\big)^{\top}
& (\mathbf{b})^{\top}
\end{bmatrix}
^{\top}
$.
Therefore to calculate $\mathbf{J}_{\bm{\theta}}\mathbf{z}(\bm{\theta})$ one can write the following:
$$
\begin{aligned}
\mathbf{J}_{\bm{\theta}}\mathbf{z}(\bm{\theta})
&=
\mathbf{J}_{\mathbf{W}, \mathbf{b}}
\Big(
\big(
\mathbf{W}
\big)^{\top}\mathbf{x}
+
\mathbf{b}
\Big)
\\
&=
\mathbf{J}_{\mathbf{W}, \mathbf{b}}
\Big(
\begin{bmatrix}
\big(\mathbf{W}_{\bullet1}
\big)^{\top}
\mathbf{x}
+
b_1
\\
\big(
\mathbf{W}_{\bullet2}
\big)^{\top}
\mathbf{x}
+
b_2
\\
\big(
\mathbf{W}_{\bullet3}
\big)^{\top}
\mathbf{x}
+
b_3
\end{bmatrix}
\Big)
\\
&=
\begin{bmatrix}
\mathbf{x}^{\top} & 0 & 0 & 1 & 0 & 0
\\
0 & \mathbf{x}^{\top} & 0 & 0 & 1 & 0
\\
0 & 0 & \mathbf{x}^{\top} & 0 & 0 & 1
\end{bmatrix}
\\
&=
\begin{bmatrix}
\mathbf{x}^{\top} & 0 & 0 & 
\\
0 & \mathbf{x}^{\top} & 0 & \mathbf{I}_{3\times3}
\\
0 & 0 & \mathbf{x}^{\top} & 
\end{bmatrix}
\in \mathbb{R}^{3\times 18}
\end{aligned}
$$
where $\mathbf{W}_{\bullet i}$ is a column of $\mathbf{W}$ for $i=1,2,3$ and $\mathbf{I}_{3\times 3}$ appears because the derivative of each element of $\mathbf{z}$ with respect to components of $\mathbf{b}$ are either zero or one.
Therefore the gradient of the loss function is calculated as follows
$$
\begin{aligned}
\nabla_{\bm{\theta}}
\ell
\big(
\mathbf{y}, \hat{\mathbf{y}}
(\bm{\theta})
\big)
&= 
\big(
\mathbf{J}_{\bm{\theta}}\mathbf{z}(\bm{\theta})
\big)^{\top}
\nabla_{\mathbf{z}}
\ell
\big(
\mathbf{y},
\mathbf{f}
(
\mathbf{z}
)\big)\\
&=
-
\begin{bmatrix}
 \mathbf{x}   & 0 & 0
 \\
 0 & \mathbf{x} & 0
 \\
 0 & 0 & \mathbf{x}
 \\
 & \mathbf{I}_{3\times3}& 
\end{bmatrix}
(\mathbf{y}-\hat{\mathbf{y}})
\end{aligned}
$$
where the value for 
$
\nabla_{\mathbf{z}}
\ell
\big(
\mathbf{y},
\mathbf{f}
(
\mathbf{z}
)\big)
$
is obtained from Tab. \ref{tab:activaionllosscombination}.
The above gradient is similar to the gradient of one-layer networks whose weights are vectors not matrices as shown in Tab. \ref{tab:onelayernetworks}. As it can be seen from Tab. \ref{tab:onelayernetworks} famous problems such as simple/multiple linear regression, simple binary classification, and logistic regression can be written as a one-layer network whose weight are vectors not matrices.

\begin{figure}[b]
    \centering 
    \includegraphics[scale=0.16]{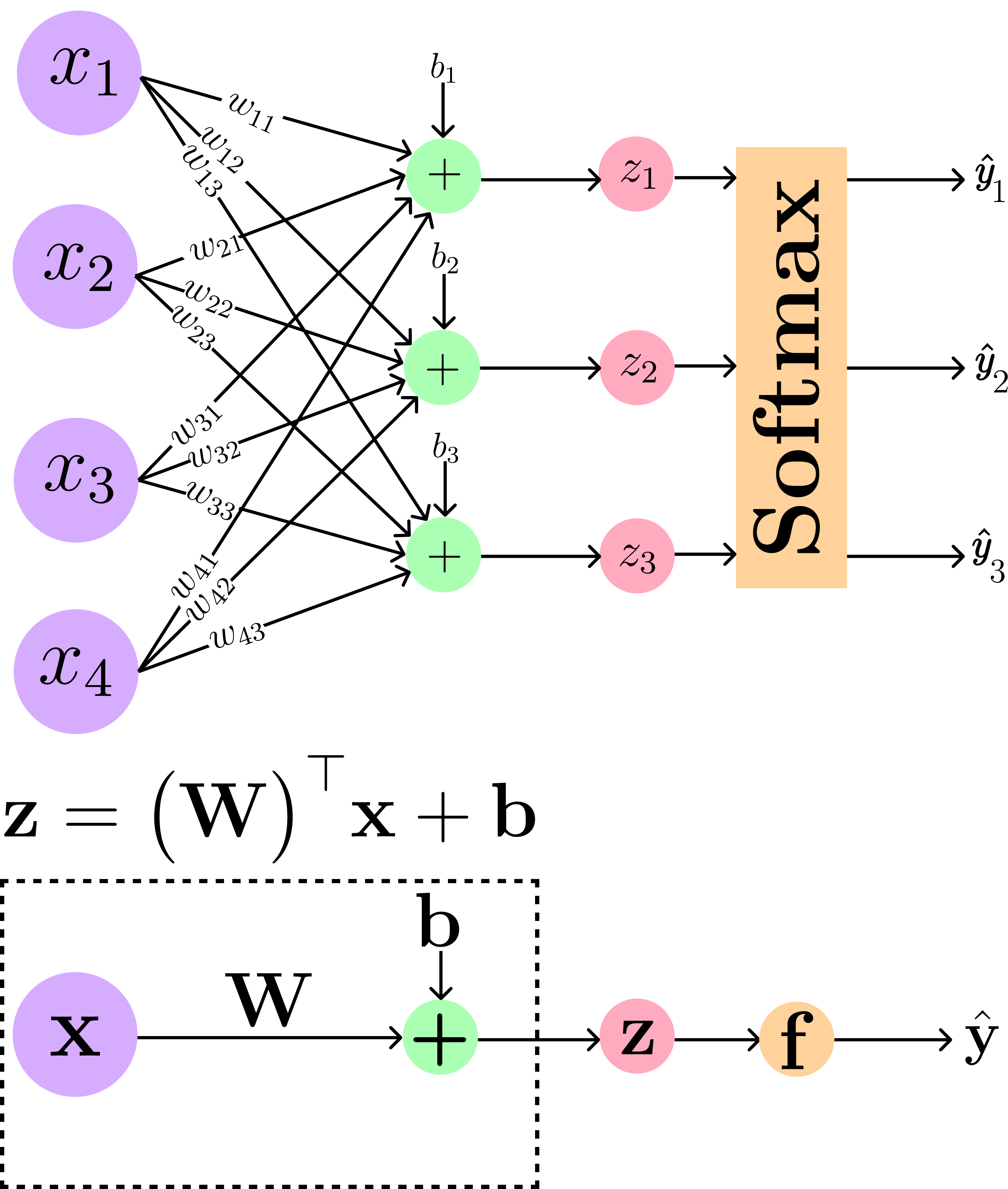}
    \caption{3-class classifier with 4  features and a single layer.}
    \label{fig:onelayernetwork}
\end{figure}
\begin{table*}[h]
\centering
\caption{Famous problems represented as simple neural networks.
}
\label{tab:onelayernetworks}
\scalebox{0.6}{
\begin{tabular}{@{}l *{5}{l}@{}}
\hline
\toprule
 &
 Simple Linear Regression
 &
 Simple Binary Classifier
 &
 Multiple Linear Regression
 &
 Logistic Regression
 \\
\midrule
Architecture
&
\centering
\includegraphics[scale=0.30]{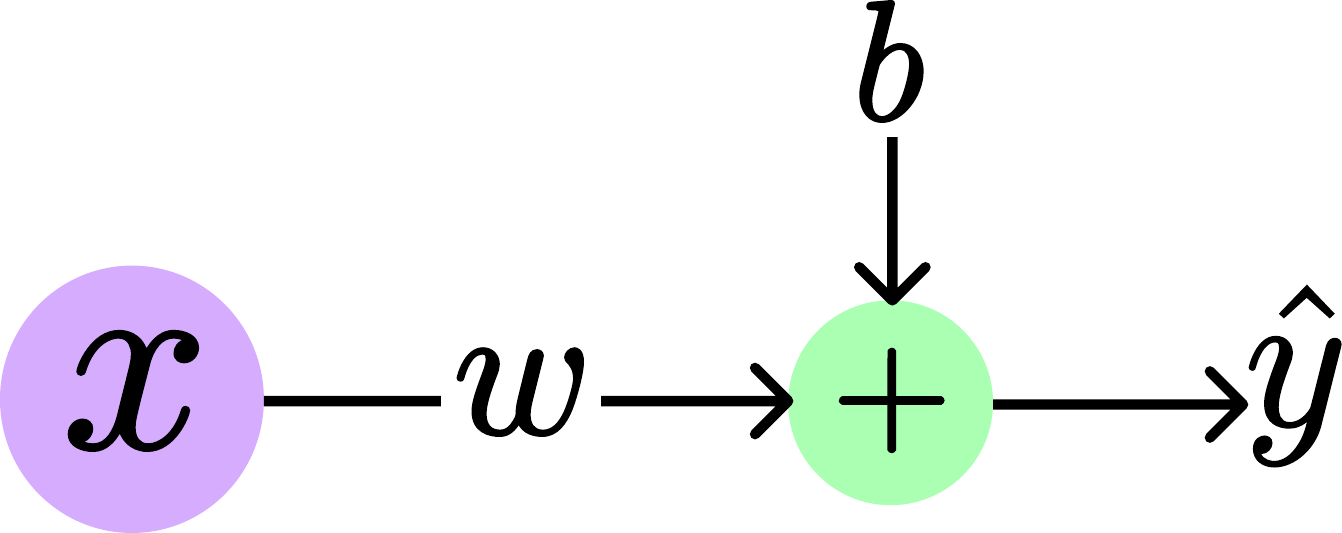}
&
\centering
\includegraphics[scale=0.23]{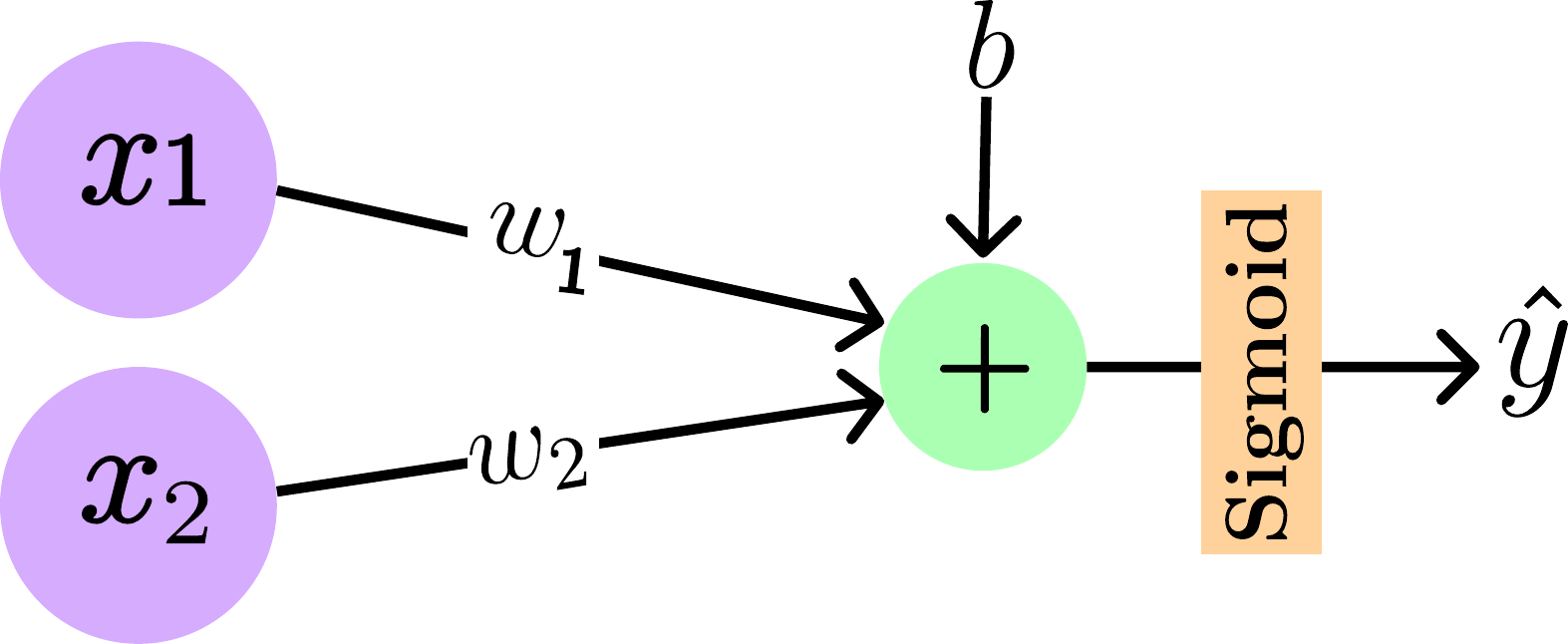}
&
\centering
\includegraphics[scale=0.30]{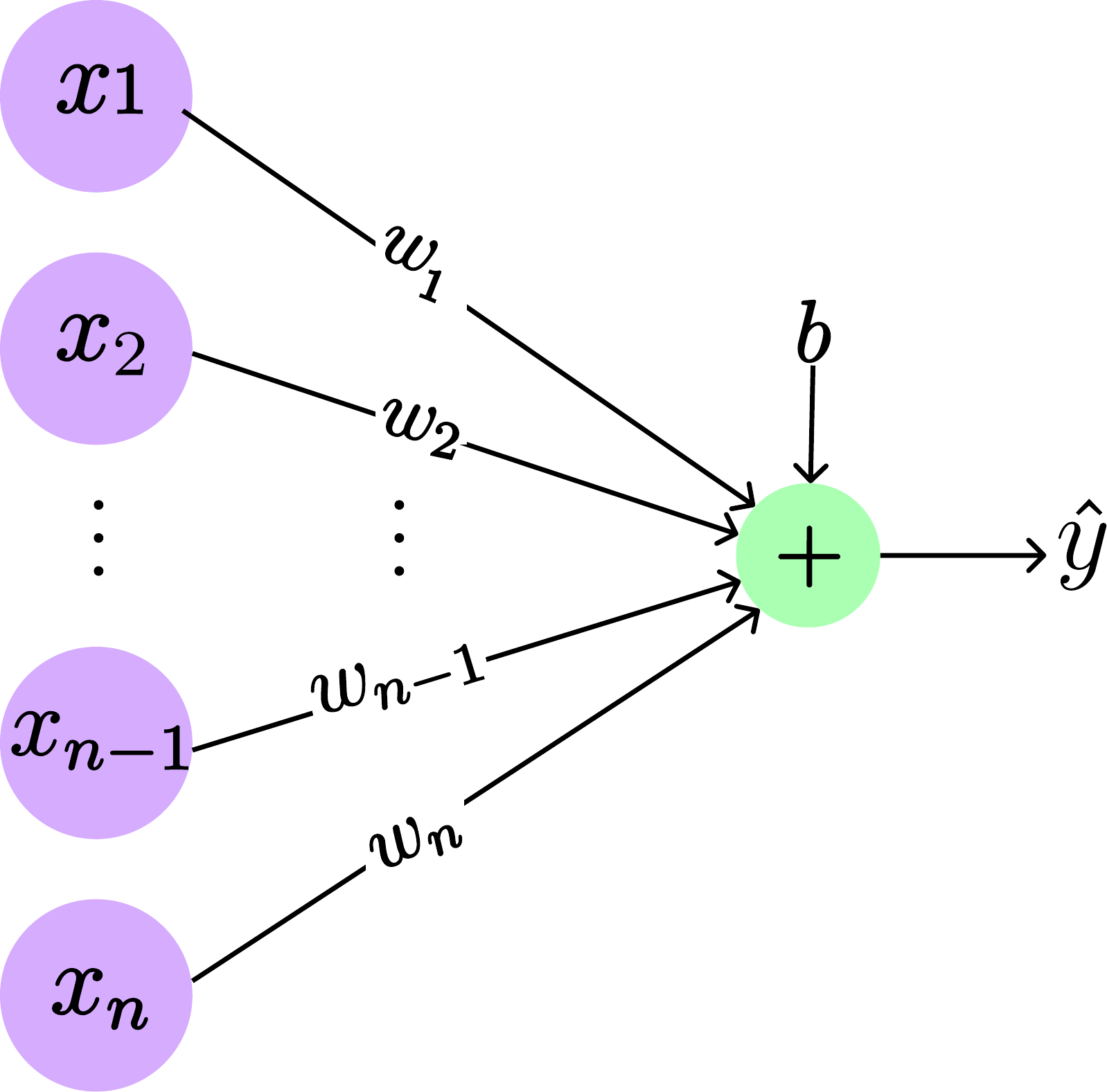}
&
\includegraphics[scale=0.25]{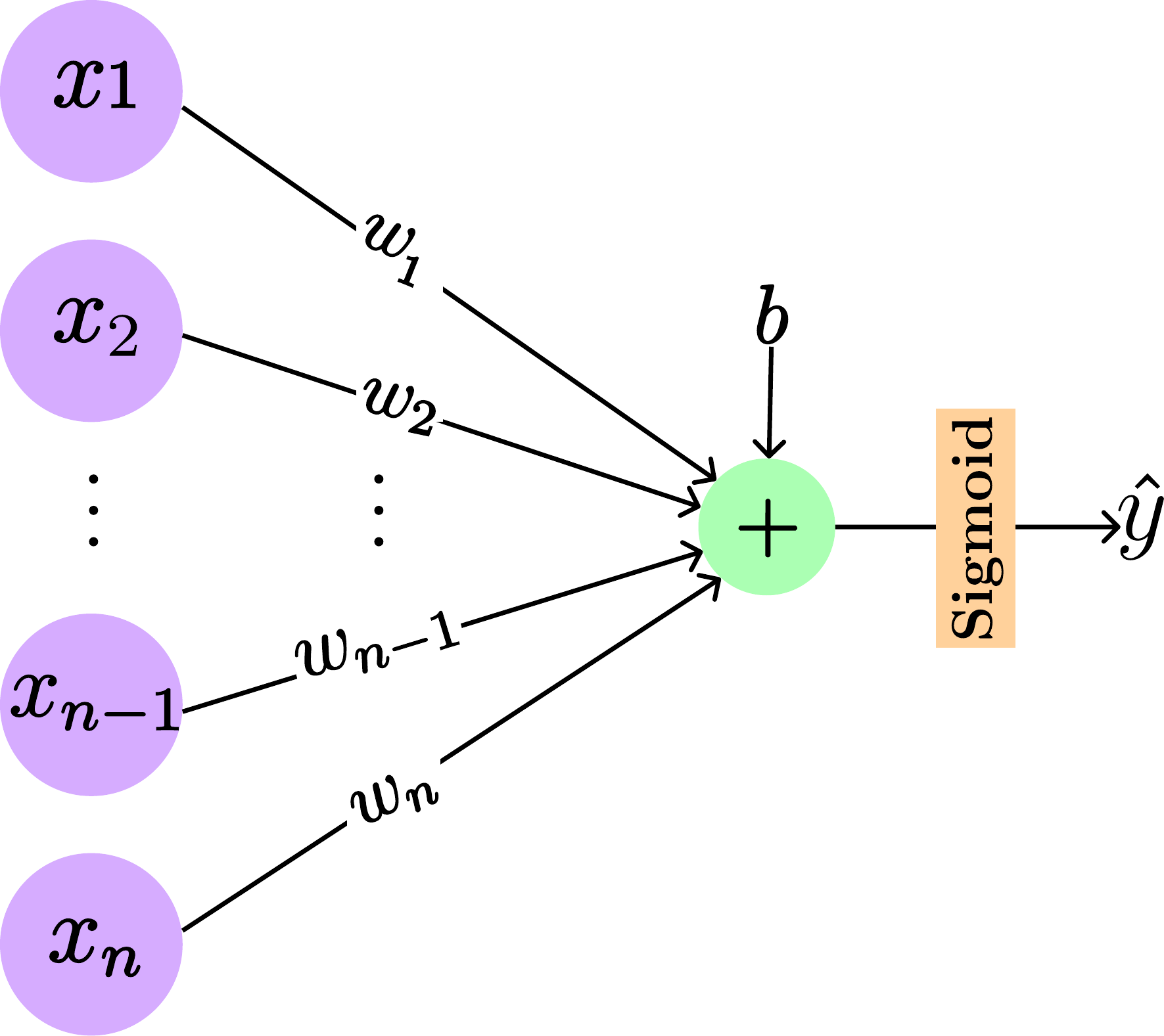}
\\
\midrule
Input
&
$x \in \mathbb{R}$
&
$
\mathbf{x}
=
\begin{bmatrix}
x_1\\x_2    
\end{bmatrix}
\in \mathbb{R}^2
$
&
$
\mathbf{x}
=
\begin{bmatrix}
x_1 \\ x_2 \\ \vdots \\ x_{n-1} \\ x_n    
\end{bmatrix}
\in \mathbb{R}^n
$
&
$
\mathbf{x}
=
\begin{bmatrix}
x_1 \\ x_2 \\ \vdots \\ x_{n-1} \\ x_n    
\end{bmatrix}
\in \mathbb{R}^n
$
\\
\midrule
Parameters
&
$
\bm{\theta}
=\begin{bmatrix}
w\\b    
\end{bmatrix}
\in \mathbb{R}^2
$
&
$
\bm{\theta}
=
\begin{bmatrix}
\mathbf{w}
\\b
\end{bmatrix}
\in \mathbb{R}^3
$
&
$
\bm{\theta}
=\begin{bmatrix}
w_1
\\
\vdots
\\
w_n
\\b
\end{bmatrix}
=\begin{bmatrix}
\mathbf{w}
\\b
\end{bmatrix}
\in \mathbb{R}^{n+1}
$
&
$
\bm{\theta}
=\begin{bmatrix}
w_1
\\
\vdots
\\
w_n
\\b
\end{bmatrix}
=\begin{bmatrix}
\mathbf{w}
\\b
\end{bmatrix}
\in \mathbb{R}^{n+1}
$
\\
\midrule
Predictions 
&
$
\hat{y}=wx+b
$
&
$
\begin{aligned}
\hat{y}
&=
\sigma(\bm{\theta}^{\top}\mathbf{x})
\end{aligned}
$
&
$
\begin{aligned}
\hat{y}
&=
\mathbf{w}^{\top}\mathbf{x}+b
\end{aligned}
$
&
$
\begin{aligned}
\hat{y}
&=
\sigma(\bm{\theta}^{\top}\mathbf{x})
\end{aligned}
$
\\
\midrule
Loss
&
$
\ell=
(y-\hat{y})^2
$
&
$
\ell=
-
y\log(\hat{y})-(1-y)\log(1-\hat{y})
$
&
$
\ell
=
(y-\hat{y})^2 
$
&
$
\ell=
-
y\log(\hat{y})-(1-y)\log(1-\hat{y})
$
\\
\midrule
Gradient
&
$\nabla \ell_{\bm{\theta}}(y, \hat{y})
=
-
2\begin{bmatrix}
x\\1    
\end{bmatrix}
(y-\hat{y})
$
&
$
\begin{aligned}
\nabla \ell_{\bm{\theta}}(y, \hat{y})
&=
-
\begin{bmatrix}
\mathbf{x}\\1    
\end{bmatrix}
(y-\sigma(\bm{\theta}^{\top}\mathbf{x})) 
\end{aligned}
$
&
$\nabla \ell_{\bm{\theta}}(y, \hat{y})
=
-
2\begin{bmatrix}
\mathbf{x}\\1    
\end{bmatrix}
(y-\hat{y})
$
&
$
\begin{aligned}
\nabla \ell_{\bm{\theta}}(y, \hat{y})
&=
-
\begin{bmatrix}
\mathbf{x}\\1    
\end{bmatrix}
(y-\sigma(\bm{\theta}^{\top}\mathbf{x})) 
\end{aligned}
$
\\
\midrule
\bottomrule
\hline
\end{tabular}
}
\end{table*}


Although a one-layer network provides valuable understanding of the relationship between the Jacobian and the gradient of the loss function with respect to $\mathbf{z}$, it is not practical in terms of performance, i.e., accuracy in prediction. To improve performance, adding more layers is recommended. As a result, the following subsection will demonstrate the calculation of the gradient of a two-layer network.

\subsection{Gradient of a two-layer network}
Studying a two-layer network can not only improve performance (accuracy) but also aid in developing a method for calculating the gradient of any deep neural network (DNN) with multiple layers. To demonstrate this, a two-layer network will be considered, as shown in Fig. \ref{fig:twolayernetwork} where block-wise model helps calculating the gradient of its loss.
\begin{figure}[t]
    \centering 
    \includegraphics[scale=0.1]{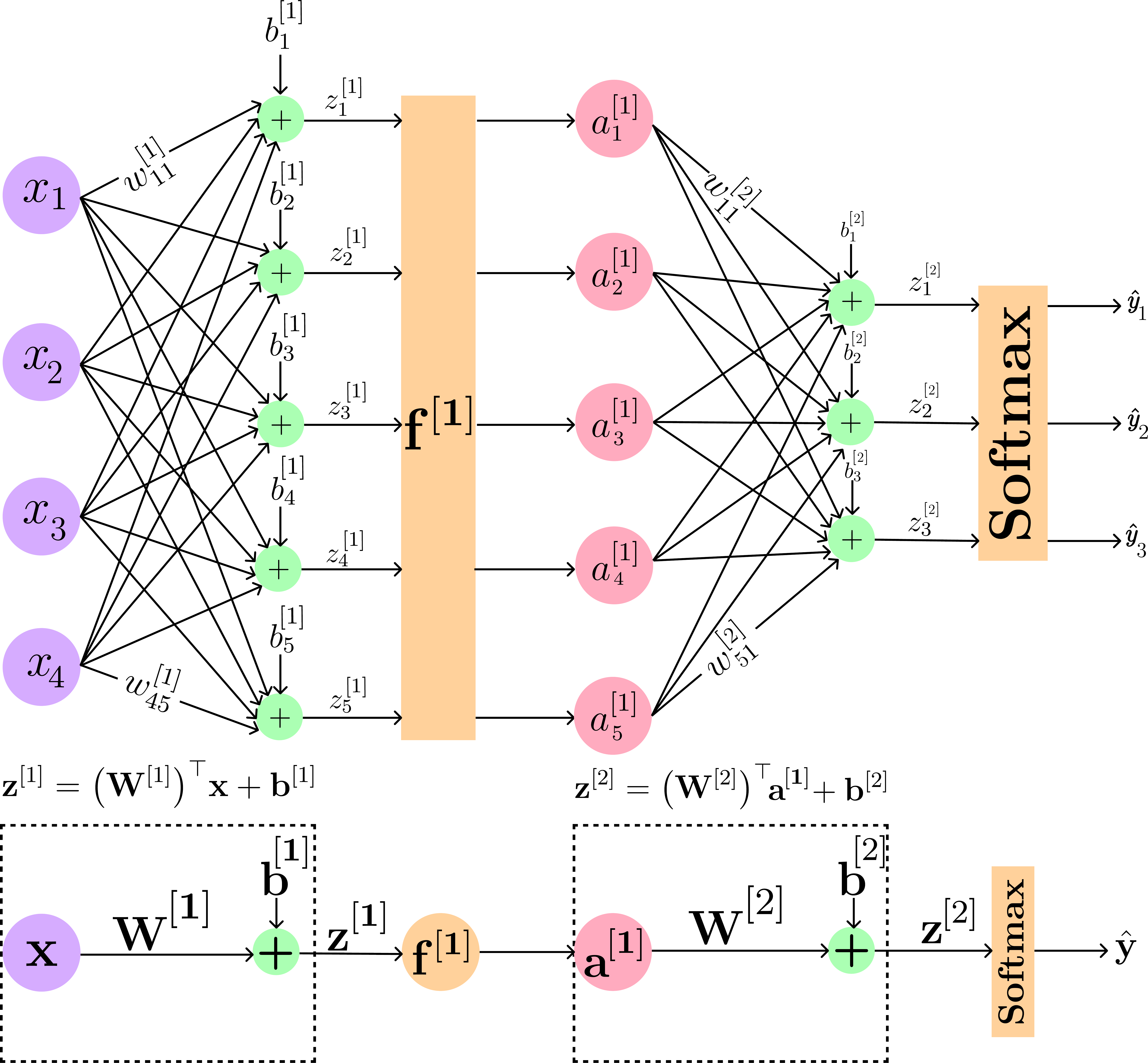}
    \caption{Two-layer network}
    \label{fig:twolayernetwork}
\end{figure}

Similar to a one-layer network, one can write
$
\nabla_{\mathbf{z}^{[2]}}
\ell
\Big(
\mathbf{y},
\mathbf{f}^{[2]}
\big(
\mathbf{z}^{[2]}
\big)\Big)
=
-(\mathbf{y}-\hat{\mathbf{y}})
$
from Tab. \ref{tab:activaionllosscombination}.
To calculate $\mathbf{J}_{\bm{\theta}}
\mathbf{z}^{[2]}
(\bm{\theta})$, observe that it can be separated into parameters of second and first layer as does $\bm{\theta}$. The separation for $\bm{\theta}$ can be written as the following: 
$$
\bm{\theta}^{\top}:=
\begin{bmatrix}
\big(\text{Vec}(\mathbf{W}^{[1]})\big)^{\top}
& (\mathbf{b}^{[1]})^{\top}
&
\big(\text{Vec}(\mathbf{W}^{[2]})\big)^{\top}
& (\mathbf{b}^{[2]})^{\top} 
\end{bmatrix}
^{\top}
$$
which is used to write the below separation of Jacobian matrices:
$$
\begin{aligned}
 \mathbf{J}_{\bm{\theta}}
\mathbf{z}^{[2]}
(\bm{\theta})
&=
\begin{bmatrix}
 \mathbf{J}_{\mathbf{W}^{[1]}, \mathbf{b}^{[1]}} 
 \Big(
\mathbf{z}^{[2]}
(\bm{\theta})
\Big)
&
\mathbf{J}_{\mathbf{W}^{[2]}, \mathbf{b}^{[2]}} 
 \Big(
\mathbf{z}^{[2]}
(\bm{\theta})
\Big)
\end{bmatrix}
\end{aligned}.
$$
Calculating the second term is straight forward because one can write the following:
$$
\begin{aligned}
\mathbf{J}_{\mathbf{W}^{[2]}, \mathbf{b}^{[2]}} 
 \Big(
\mathbf{z}^{[2]}
(\bm{\theta})
\Big) 
&=
\mathbf{J}_{\mathbf{W}^{[2]}, \mathbf{b}^{[2]}}
\Big(
\big(
\mathbf{W}^{[2]}
\big)^{\top}\mathbf{a}^{[1]}
+
\mathbf{b}^{[2]}
\Big)
\\
&=
\begin{bmatrix}
\big(
\mathbf{a}^{[1]}
\big)
^{\top} & 0 & 0 & 
\\
0 & \big(
\mathbf{a}^{[1]}
\big)
^{\top} & 0 & \mathbf{I}_{3\times3}
\\
0 & 0 & \big(
\mathbf{a}^{[1]}
\big)
^{\top} & 
\end{bmatrix}
\end{aligned}
$$
where 
$
\mathbf{a}^{[1]} \in \mathbb{R}^{5}$ because $\mathbf{W}^{[2]} \in \mathbb{R}^{5 \times 3}$.
To calculate the second term 
first observe  that $\mathbf{W}^{[2]}$ is not a function of $\mathbf{W}^{[1]}$ nor $\mathbf{b}^{[1]}$. Then one can write the following:
$$
\begin{aligned}
\mathbf{J}_{\mathbf{W}^{[1]}, \mathbf{b}^{[1]}} 
 \Big(
\mathbf{z}^{[2]}
(\bm{\theta})
\Big)
&=
\mathbf{J}_{\mathbf{W}^{[1]}, \mathbf{b}^{[1]}}
\Big(
\big(
\mathbf{W}^{[2]}
\big)^{\top}\mathbf{a}^{[1]}
+
\mathbf{b}^{[2]}
\Big)
\\
&=
\big(
\mathbf{W}^{[2]}
\big)^{\top}
\mathbf{J}_{\mathbf{W}^{[1]}, \mathbf{b}^{[1]}}
\Big(
\mathbf{a}^{[1]}
\Big)
\\
&=
\big(
\mathbf{W}^{[2]}
\big)^{\top}
\mathbf{J}_{\mathbf{W}^{[1]}, \mathbf{b}^{[1]}}
\Big(
\mathbf{f}^{[1]}(\mathbf{z}^{[1]})
\Big)
\\
&=
\big(
\mathbf{W}^{[2]}
\big)^{\top}
\mathbf{J}_{\mathbf{W}^{[1]}, \mathbf{b}^{[1]}}
\Big(
\mathbf{f}^{[1]}(\mathbf{z}^{[1]})
\Big)
\end{aligned}
$$
The last equality is where the chain rule needs to be used in order to obtain an expression in terms of $\mathbf{W}^{[1]}$ and $\mathbf{b}^{[1]}$, as shown below:
$$
\begin{aligned}
\mathbf{J}_{\mathbf{W}^{[1]}, \mathbf{b}^{[1]}} 
 \Big(
\mathbf{z}^{[2]}
(\bm{\theta})
\Big)
&=
\big(
\mathbf{W}^{[2]}
\big)^{\top}
\mathbf{J}_{\mathbf{z}^{[1]}}
\Big(
\mathbf{f}^{[1]}
\big(
\mathbf{z}^{[1]}
\big)
\Big)
\mathbf{J}_{\mathbf{W}^{[1]}, \mathbf{b}^{[1]}}
\Big(
\big(
\mathbf{W}^{[1]}
\big)^{\top}\mathbf{x}
+
\mathbf{b}^{[1]}
\Big) 
\\
&=
\big(
\mathbf{W}^{[2]}
\big)^{\top}
\mathbf{J}_{\mathbf{z}^{[1]}}
\Big(
\mathbf{f}^{[1]}
\big(
\mathbf{z}^{[1]}
\big)
\Big)
\begin{bmatrix}
\mathbf{x}
^{\top} & 0 & 0
\\
0 & \ddots & 0 & I
\\
0 & 0 & 
\mathbf{x}
^{\top} & 
\end{bmatrix}
\end{aligned}.
$$
Finally, the gradient would be the following:
$$
\begin{aligned}
\nabla_{\bm{\theta}}
\ell
\big(
\mathbf{y}, \hat{\mathbf{y}}
(\bm{\theta})
\big)
&= 
\big(
\mathbf{J}_{\bm{\theta}}
\mathbf{z}^{[2]}(\bm{\theta})
\big)^{\top}
\nabla_{\mathbf{z}^{[2]}}
\ell
\big(
\mathbf{y},
\mathbf{f}
(
\mathbf{z}^{[2]}
)\big)\\
&=
-
\begin{bmatrix}
\begin{bmatrix}
 \mathbf{x}   & 0 & 0
 \\
 0 & \ddots & 0 
 \\
 0 & 0 & \mathbf{x}
 \\
 & \mathbf{I}_{5\times5} &
\end{bmatrix} 
\Big(
\mathbf{J}_{\mathbf{z}^{[1]}}
\Big(
\mathbf{f}^{[1]}
\big(
\mathbf{z}^{[1]}
\big)
\Big)
\Big)^{\top}
\mathbf{W}^{[2]}
\\
\begin{bmatrix}
 \mathbf{a}^{[1]}   & 0 & 0
 \\
 0 &  \mathbf{a}^{[1]} & 0
 \\
 0 & 0 &  \mathbf{a}^{[1]}
 \\
 & \mathbf{I}_{3\times3}& 
\end{bmatrix}
\end{bmatrix}
(\mathbf{y}-\hat{\mathbf{y}})
\end{aligned}
.
$$
The next subsection will demonstrate the calculation of the gradient for a three-layer network, thereby illustrating the extension of a two-layer network gradient to an arbitrary number of layers.
\begin{figure*}[t]
    \centering 
\includegraphics[scale=0.14]{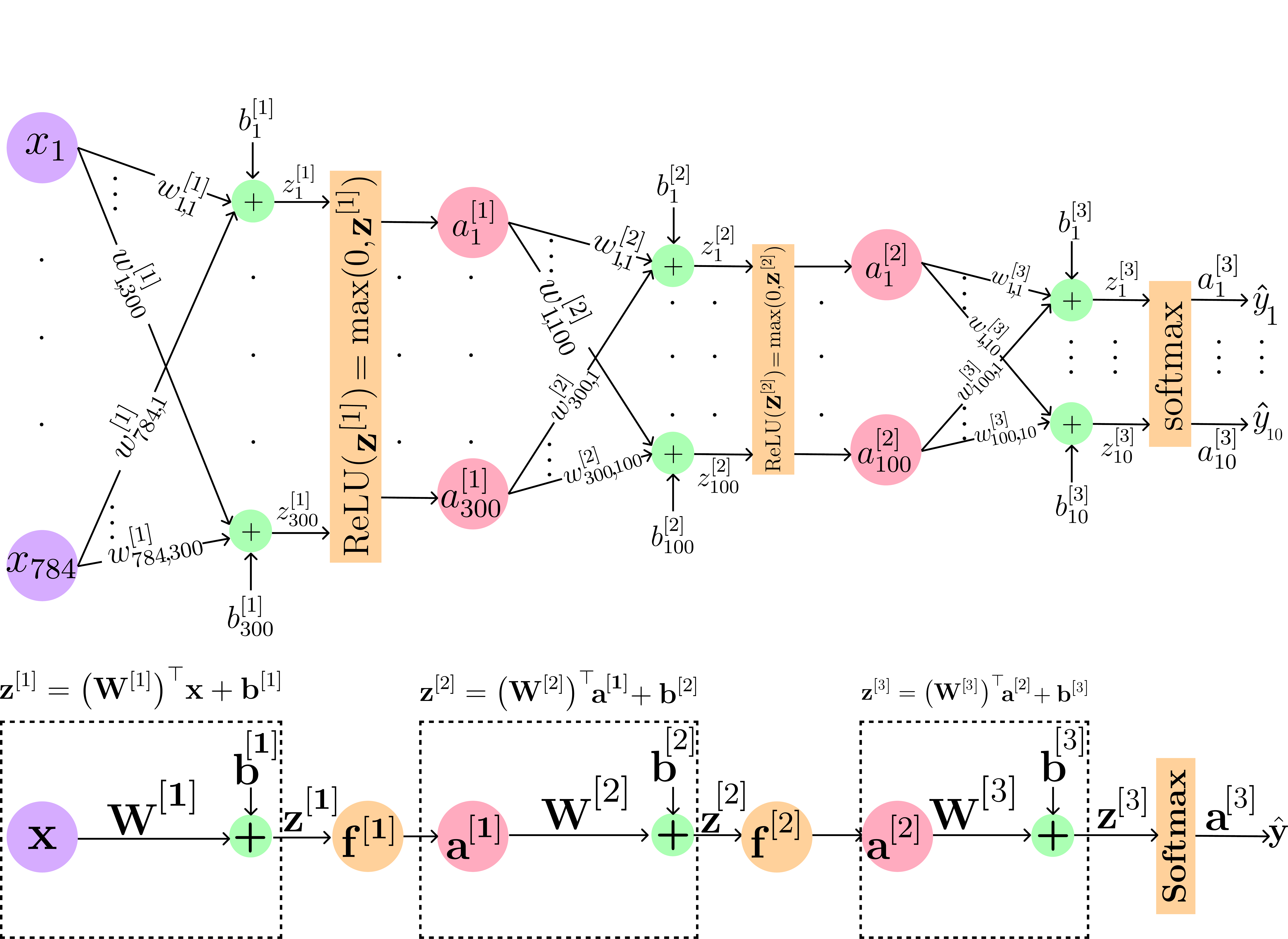}
    \caption{LeNet-100-300-10}
\label{fig:threelayernetwork}
\end{figure*}

\subsection{Gradient of a three-layer network}

In this subsection, we evaluate the gradient of a loss function for a LeNet-100-300-10 network architecture, which consists of 100, 300, and 10 units (neurons) in the first, second and third layer respectively \cite{lecun1998gradient}. This network is used for image classification tasks \cite{lecun1998mnist}. The input to the network is a vectorized representation of $28\times 28$ digit images ($784$ elements) and the output is a vector in $\mathbb{R}^{10}$. The input to the LeNet-100-300-10 is a vector $\mathbf{x}\in \mathbb{R}^{784}$ representing the vectorized image.
Similar to a one- and two-layer networks
$
\nabla_{\mathbf{z}^{[3]}}
\ell
\Big(
\mathbf{y},
\mathbf{f}^{[3]}
\big(
\mathbf{z}^{[3]}
\big)\Big)
=
-(\mathbf{y}-\hat{\mathbf{y}}) \in \mathbb{R}^{10}
$ which is obtained from Tab. \ref{tab:activaionllosscombination}.
To calculate $\mathbf{J}_{\bm{\theta}}
\mathbf{z}^{[3]}
(\bm{\theta})$, three Jacobian are needed to separate out the  parameters of the layers as the following:
$$
\begin{aligned}
 \mathbf{J}_{\bm{\theta}}
\mathbf{z}^{[3]}
(\bm{\theta})
&=
\begin{bmatrix}
\mathbf{J}_{\mathbf{W}^{[1]}, \mathbf{b}^{[1]}} 
 \Big(
\mathbf{z}^{[3]}
(\bm{\theta})
\Big) &\mathbf{J}_{\mathbf{W}^{[2]}, \mathbf{b}^{[2]}}
 \Big(
\mathbf{z}^{[3]}
(\bm{\theta})
\Big)
&
\mathbf{J}_{\mathbf{W}^{[3]}, \mathbf{b}^{[3]}} 
 \Big(
\mathbf{z}^{[3]}
(\bm{\theta})
\Big)
\end{bmatrix}
\end{aligned}.
$$
By following the same steps as for the two-layer network, the gradient can be calculated as:
$$
\nabla_{\bm{\theta}}
\ell
\big(
\mathbf{y}, \hat{\mathbf{y}}
(\bm{\theta})
\big)
= 
\big(
\mathbf{J}_{\bm{\theta}}
\mathbf{z}^{[3]}(\bm{\theta})
\big)^{\top}
\nabla_{\mathbf{z}^{[3]}}
\ell
\big(
\mathbf{y},
\mathbf{f}
(
\mathbf{z}^{[3]}
)\big)
$$
where Jacobian matrices of layers 
$\mathbf{J}^{\top}_{\mathbf{W}^{[1]}, \mathbf{b}^{[1]}}
 \Big( 
\mathbf{z}^{[3]}
(\bm{\theta})
\Big)
$
,
$\mathbf{J}^{\top}_{\mathbf{W}^{[2]}, \mathbf{b}^{[2]}}
 \Big(
\mathbf{z}^{[3]}
(\bm{\theta})
\Big)
$
,
$\mathbf{J}^{\top}_{\mathbf{W}^{[3]}, \mathbf{b}^{[3]}} 
 \Big(
\mathbf{z}^{[3]}
(\bm{\theta})
\Big)$
are:
$$
\begin{bmatrix}
 \mathbf{x} & 0 & 0 
 \\
 0 & \ddots & 0 
 \\
 0 & 0 & \mathbf{x} 
 \\
 & \mathbf{I}_{300\times300} &
\end{bmatrix} 
\Big(
\mathbf{J}_{\mathbf{z}^{[1]}}
\Big(
\mathbf{f}^{[1]}
\big(
\mathbf{z}^{[1]}
\big)
\Big)
\Big)^{\top}
\mathbf{W}^{[2]}
\Big(
\mathbf{J}_{\mathbf{z}^{[2]}}
\Big(
\mathbf{f}^{[2]}
\big(
\mathbf{z}^{[2]}
\big)
\Big)
\Big)^{\top}
\mathbf{W}^{[3]},
$$
$$
\begin{bmatrix}
 \mathbf{a}^{[1]}   & 0 & 0 
 \\
 0 & \ddots & 0 
 \\
 0 & 0 &  \mathbf{a}^{[1]}
 \\
 & \mathbf{I}_{100\times100} &
\end{bmatrix} 
\Big(
\mathbf{J}_{\mathbf{z}^{[2]}}
\Big(
\mathbf{f}^{[2]}
\big(
\mathbf{z}^{[2]}
\big)
\Big)
\Big)^{\top}
\mathbf{W}^{[3]},
$$
$$
\begin{bmatrix}
 \mathbf{a}^{[2]}   & 0 & 0
 \\
 0 &  \ddots & 0
 \\
 0 & 0 &  \mathbf{a}^{[2]}
 \\
 & \mathbf{I}_{10\times10}& 
\end{bmatrix},
$$
and 
$\nabla_{\mathbf{z}^{[3]}}
\ell
\big(
\mathbf{y},
\mathbf{f}
(
\mathbf{z}^{[3]}
)\big)
=-(\mathbf{y}-\hat{\mathbf{y}})
$
which is consistent with Alg. \ref{alg:backprop}.

\subsection{Jacobian of activation functions}
In this subsection we elaborate on
$\mathbf{J}_{\mathbf{z}^{[l]}}
\Big(
\mathbf{f}^{[l]}
\big(
\mathbf{z}^{[l]}
\big)
\Big)
$ where $\mathbf{f}^{[l]}$ is the $l$-th activation layer whose corresponding input is $\mathbf{z}^{[l]}$ for $l=1,\dots, L-1$. 
When $l \neq L$, the vector $\mathbf{f}^{[l]}(\mathbf{z}^{[l]})$ is typically obtained by applying a single univariate function $f$ to each element of $\mathbf{z}^{[l]}$. The most common activation function used in DNNs is the Rectified Linear Unit (ReLU) function, defined as $f(x)=\max(0,x)$ \cite{fukushima1969visual,fukushima1975cognitron,rumelhart1986general,nair2010rectified}. Algebraically, this operation can be represented as
$$
\mathbf{f}^{\top}
\big(
\mathbf{z}
\big)
=
\begin{bmatrix}
f(z_1) &
\cdots &
f(z_{d})
\end{bmatrix}
^{\top}
$$
where $f$ is the same univariate function applied to all elements of $\mathbf{z}$, $d$ is the output size of the layer, and we have omitted superscripts for clarity.
This special structure results in a diagonal matrix, i.e., 
$$
\mathbf{J}_{\mathbf{z}}
\big(
\mathbf{f}
(
\mathbf{z}
)
\big)
=
\text{diag}
\Big(
f'(z_1),
\cdots,
f'(z_{d})
\Big)
$$
where $f'$ is the derivative of a univariate function. 
For the special case of the ReLU function 
$
\mathbf{J}_{\mathbf{z}}
\big(
\mathbf{f}
(
\mathbf{z}
)
\big)
$
is a diagonal matrix of zeros and ones which are associated to the negative and positive elements of $\mathbf{z}$.
Multiplying such a matrix from the left to any matrix $\mathbf{W}$ results in removing the rows of $\mathbf{W}$ associated to zero elements in $
\mathbf{J}_{\mathbf{z}}
\big(
\mathbf{f}
(
\mathbf{z}
)
\big)
$
which greatly decreases the computation. The zero-th norm of the parameter vector can be minimized in sparse optimizations using these intuitions, as noted in \cite{damadi2022gradient}.

Note that the derivations so far have only considered fully connected networks. However, for computer vision tasks, CNN models are utilized, which employ convolution operations instead of matrix multiplication in some layers. In the next subsection, we will demonstrate how a convolutional layer can be transformed into a fully connected network.

\subsection{Convolution as matrix multiplication}
A 2-D convolution operation is a mathematical operation that is used to extract features or patterns from a 2-dimensional input (matrix), such as an image. It works by applying a filter or kernel, which is also a matrix, to the input image \cite{lecun1998gradient}. The filter is moved across the image, performing element-wise multiplications with the overlapping regions of the image and filter, and then summing the results. This process is repeated for every position of the filter on the image, resulting in a new matrix output, known as a feature map. Fig. \ref{fig:convolution} shows the process of a convolution operation where the 3x3 and 2x2 matrices represent the input and the filter respectively. As Fig. \ref{fig:convolution} illustrates the filter slides over the input image, one pixel at a time, and performs element-wise multiplications with the overlapping region of the image. The result of these multiplications is then summed, and the sum is stored in the corresponding location of the output feature map. 

\begin{figure}[t]
    \centering \includegraphics[scale=0.2]{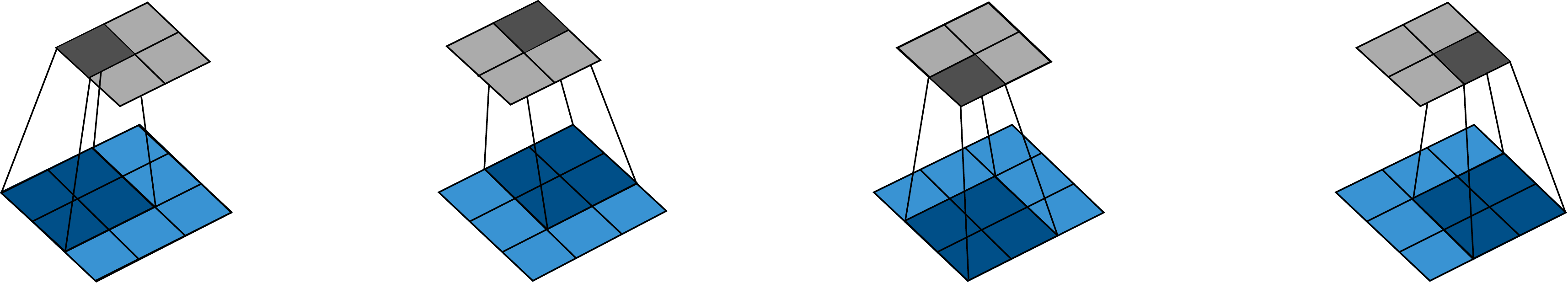}
    \caption{An illustration of convolution}
\label{fig:convolution}
\end{figure}

The size of the filter and the stride (the number of pixels the filter is moved each time) determine the size of the output feature map. Additionally, the filter can be applied multiple times with different filter parameters, to extract different features from the same input image.

\begin{lemma}
The convolution operation between two matrices, $\mathbf{X}$ and $\mathbf{K}$, can be represented as a matrix multiplication. Specifically, it can be represented as the product of a Toeplitz matrix (or diagonal-constant matrix) of $\mathbf{K}$ and the vector obtained from stacking the columns of the transpose of $\mathbf{X}$ in the order of the first one on top. Mathematically, this can be represented as:
$$\mathbf{X}*\mathbf{K}=\text{Teop}(\mathbf{K})\text{Vec}(\mathbf{X}^{\top})$$
where $\mathbf{X}\in \mathbb{R}^{m_X \times n_X}$ and $\mathbf{K}\in \mathbb{R}^{m_K \times n_K}$ and $m_X, n_X, m_X, n_X \in \mathbb{N}$.
  
\end{lemma}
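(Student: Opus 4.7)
The plan is to unfold the definition of the 2D convolution entry by entry, rearrange the resulting double sum as a single linear combination, and then exhibit the coefficient matrix as a (doubly block) Toeplitz matrix. The key observation I would lean on is that $\mathbf{X} * \mathbf{K}$ is bilinear in $\mathbf{X}$ and $\mathbf{K}$, so fixing $\mathbf{K}$ makes the map $\mathbf{X} \mapsto \mathbf{X} * \mathbf{K}$ linear; every linear map between finite-dimensional spaces is a matrix multiplication once bases (here, the vectorization orderings) are fixed. The content of the lemma is therefore not the existence of such a matrix, but its explicit Toeplitz structure.

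First, I would write the $(i,j)$-th entry of the output as $(\mathbf{X} * \mathbf{K})_{ij} = \sum_{p=1}^{m_K}\sum_{q=1}^{n_K} k_{pq}\, x_{i+p-1,\, j+q-1}$ for $1 \le i \le m_X - m_K + 1$ and $1 \le j \le n_X - n_K + 1$, matching the sliding-window convention illustrated in Fig.~\ref{fig:convolution}. Second, I would vectorize the output by stacking its rows (equivalently, stacking the columns of its transpose), so that output entry $(i,j)$ lands in a predictable position in the resulting column vector. Third, I would write $\text{Vec}(\mathbf{X}^{\top})$, which stacks the rows of $\mathbf{X}$ one after another, and express each entry of the output vector as a dot product between $\text{Vec}(\mathbf{X}^{\top})$ and a sparse row whose nonzero pattern is determined by which entries of $\mathbf{X}$ are touched by the kernel at position $(i,j)$.

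The payoff of this indexing is that shifting $j \to j+1$ shifts the nonzero pattern of the coefficient row by exactly one position, and shifting $i \to i+1$ shifts it by exactly $n_X$ positions; this is precisely the defining property of a block-Toeplitz matrix with Toeplitz blocks. I would therefore define $\text{Teop}(\mathbf{K})$ row by row: each block row corresponds to a fixed vertical offset $i$, and within that block row the entries of $\mathbf{K}$ appear padded with zeros (to length $n_X$ per kernel row) and shifted horizontally by one per column. Matching this construction against the rearranged sum then gives $\mathbf{X} * \mathbf{K} = \text{Teop}(\mathbf{K})\, \text{Vec}(\mathbf{X}^{\top})$ entry by entry.

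The main obstacle will be the index bookkeeping rather than any deep mathematics: one has to pin down the exact convention for vectorization (rows versus columns), the orientation of $\mathbf{K}$ (cross-correlation versus true convolution, which differ by a flip), and the mapping from the 2D index $(i,j)$ to the 1D index in $\text{Vec}(\mathbf{X}^{\top})$. To keep the proof transparent, I would first carry out the construction on the small $3 \times 3$ with $2 \times 2$ example of Fig.~\ref{fig:convolution}, verify the identity by direct computation, and then state the general formula, since the pattern exhibited in the small case extends verbatim to arbitrary $m_X, n_X, m_K, n_K$.
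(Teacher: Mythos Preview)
Your proposal is correct and in fact considerably more complete than what the paper provides. The paper does not give a proof of this lemma at all: after stating it, the paper offers only a remark on GPU efficiency and then the explicit $3\times 3$ input with $2\times 2$ kernel example as a ``clarification,'' writing out the $4\times 9$ Toeplitz matrix and the vectorized input so the reader can verify the identity by inspection. There is no general argument, no index formula for $\text{Teop}(\mathbf{K})$, and no discussion of the block-Toeplitz structure.

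Your plan subsumes the paper's treatment (you intend to work the same small example) and adds the general entrywise derivation plus the observation that shifts in $(i,j)$ correspond to fixed shifts in the coefficient row, which is exactly what pins down the doubly Toeplitz structure. The only caution is the one you already flag: the paper never defines $\text{Teop}(\mathbf{K})$ formally, so you will have to supply that definition yourself and make sure your vectorization convention (rows of $\mathbf{X}$ stacked, i.e.\ columns of $\mathbf{X}^{\top}$) matches the paper's $\text{Vec}(\mathbf{X}^{\top})$.
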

\begin{remark}
The above representation allows for the convolution operation to be computed efficiently using matrix multiplication, which can be parallelized and accelerated on a GPU. 
\end{remark}

To clarify the above lemma, consider $\mathbf{X}\in \mathbb{R}^{3 \times 3}$ and $\mathbf{K}\in \mathbb{R}^{2 \times 2}$ as the input and filter respectively. Then, one can verify the lemma by writing the following:
$$
\mathbf{X}*\mathbf{K}=
\begin{bmatrix}
k_{1} & k_{2} & 0 & k_{3} & k_{4} & 0 & 0 & 0 & 0\\
0 & k_{1} & k_{2} & 0 & k_{3} & k_{4} & 0 & 0 & 0  \\
0 & 0 & k_{1} & k_{2} & 0 & k_{3} & k_{4} & 0 & 0  \\
0 & 0 & 0 & k_{1} & k_{2} & 0 & k_{3} & k_{4} & 0
\end{bmatrix}
\begin{bmatrix}
x_{1} \\ x_{2} \\ x_{3} \\ x_{4} \\ x_{5} \\ x_{6} \\ x_{7} \\ x_{8} \\ x_{9}
\end{bmatrix}
$$
\begin{lemma}
For an input $\mathbf{X}$ to a convolutional layer that has $r$ number of filters, each calculation 
$\mathbf{X}*\mathbf{K}_i + \mathbf{B}_i$ is equivalent to
$
(\mathbf{W}_i)^{\top}\mathbf{x} + \mathbf{b}_i
$
where 
$\mathbf{K}_i$ is the matrix of the $i$-th filter, $\mathbf{B}_i$ is the matrix of associated bias for each filter, $\mathbf{W}_i
=
\Big(
\text{Teop}(\mathbf{K}_i)
\Big)^{\top}
$,  $\mathbf{b}_i=\text{Vec}(\mathbf{B}_i)$
for $i=1, \dots, r$, and $\mathbf{x}:=\text{Vec}(\mathbf{X}^{\top})$.
\end{lemma}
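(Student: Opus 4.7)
The plan is to invoke the preceding lemma for each filter individually and then unpack the bias term. Since each of the $r$ filters acts on the same input $\mathbf{X}$, it suffices to fix an index $i \in \{1,\dots,r\}$ and verify the stated equivalence for that single filter; the claim for the full convolutional layer then follows by reproducing the argument $r$ times.

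First I would apply the preceding lemma to the pair $(\mathbf{X},\mathbf{K}_i)$, obtaining $\mathbf{X}*\mathbf{K}_i = \text{Teop}(\mathbf{K}_i)\,\text{Vec}(\mathbf{X}^{\top})$. Substituting the stated definitions $\mathbf{x}:=\text{Vec}(\mathbf{X}^{\top})$ and $\mathbf{W}_i:=\bigl(\text{Teop}(\mathbf{K}_i)\bigr)^{\top}$, and using the elementary identity $\bigl((\text{Teop}(\mathbf{K}_i))^{\top}\bigr)^{\top}=\text{Teop}(\mathbf{K}_i)$, immediately yields $\mathbf{X}*\mathbf{K}_i = (\mathbf{W}_i)^{\top}\mathbf{x}$, which matches the linear part of the claimed expression.

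For the bias term, the key observation is that the right-hand side of the preceding lemma is already a column vector, so $\mathbf{X}*\mathbf{K}_i$ on the left must also be interpreted as the column vector listing the entries of the feature map in the order induced by the Toeplitz product. With this convention, adding the bias matrix $\mathbf{B}_i$ entrywise to the feature map corresponds, in vectorized form, to adding $\text{Vec}(\mathbf{B}_i)=\mathbf{b}_i$ to $(\mathbf{W}_i)^{\top}\mathbf{x}$, and this gives the desired equality $\mathbf{X}*\mathbf{K}_i+\mathbf{B}_i = (\mathbf{W}_i)^{\top}\mathbf{x}+\mathbf{b}_i$.

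The main subtlety, and the step I expect to require the most care, is precisely this vectorization compatibility for the bias: one must verify that the entry ordering in $\text{Vec}(\mathbf{B}_i)$ matches the ordering produced by $\text{Teop}(\mathbf{K}_i)\,\text{Vec}(\mathbf{X}^{\top})$, which is implicit in the preceding lemma's convention for how the rows of $\text{Teop}(\mathbf{K}_i)$ are assembled. Once that bookkeeping is pinned down (and it can be checked explicitly against the $3\times 3$ by $2\times 2$ example in the text), no further computation is needed and the lemma follows.
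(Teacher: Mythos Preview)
The paper states this lemma without proof; it immediately follows the statement with the sentence ``According to the above lemma, we can convert a convolutional neural network to a typical fully connected one and find its gradient.'' Your proposal is therefore not competing with any argument in the paper, and the route you take---invoke the preceding Toeplitz lemma for each filter, substitute the definitions of $\mathbf{W}_i$ and $\mathbf{x}$, then vectorize the bias---is exactly the natural one implied by how the lemma is set up. Your identification of the only nontrivial bookkeeping point (that the row ordering of $\text{Teop}(\mathbf{K}_i)$ must match the ordering used in $\text{Vec}(\mathbf{B}_i)$) is apt and is precisely what the $3\times 3$ by $2\times 2$ example in the text is meant to make concrete.
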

According to the above lemma, we can convert a convolutional neural network to a typical fully connected one and find its gradient.

\section{Conclusion}
In this paper, we demonstrated the utilization of the Jacobian operator to simplify the gradient calculation process in DNNs. We presented a matrix multiplication-based algorithm that expresses the BP algorithm using Jacobian matrices and applied it to determine gradients for single, double, and three-layer networks. Our calculations offered insights into the gradients of loss functions in DNNs; for instance, the gradient of a single-layer network can serve as a model for the final layer of any DNN. We also expanded our findings to cover more intricate architectures such as LeNet-100-300-10 and demonstrated that the gradients of convolutional neural network layers can be transformed into linear layers. These results can aid research on compressing DNNs that utilize the full gradient, as noted in \cite{damadi2022amenable}. Furthermore, they can benefit sparse optimization in both deterministic and stochastic settings, where the Iterative Hard Thresholding (IHT) algorithm uses the full gradient for a sparse solution in deterministic settings \cite{damadi2022gradient} and the mini-batch Stochastic IHT algorithm is employed in the stochastic context \cite{damadi2022convergence}.
We provided concise mathematical justifications to make the results clear and useful for people from different fields, even those without a deep understanding of the involved mathematics. This was particularly important when communicating complex technical concepts to non-experts as it allowed for a clear and accurate understanding of the results. Additionally, using mathematical notation allowed for precise and unambiguous statements of results, facilitating replication and further research in the field.
As next steps, we intend to study the calculation of gradients for loss functions in various types of neural networks such as residual, recurrent, Long Short-Term Memory (LSTM), and Transformer networks. We will also explore the Jacobian of batch normalization to further our understanding of the method.

\newpage

\bibliographystyle{IJCNN2023_conference}
\bibliography{IJCNN2023_conference}

\newpage

\appendix

\section{Operators}

\begin{definition}[Jacobian matrix of a vector-valued function]\label{def:jacobian}
Let $\mathbf{f}: \mathbb{R}^n \to \mathbb{R}^m$ be a differentiable vector-valued function where 
$
\mathbf{f}(\mathbf{x})
=
\begin{bmatrix}
f_1(\mathbf{x})
\\
\vdots
\\
f_m(\mathbf{x})
\end{bmatrix}
$
 and $\mathbf{x} \in \mathbb{R}^n$. This function takes a point $\mathbf{x} \in \mathbb{R}^n$ as an input and produces $\mathbf{f}(\mathbf{x})\in \mathbb{R}^m$ as the output. The Jacobian matrix of $\mathbf{f}$ with respect to $\mathbf{x}$ is defined to be an $m \times n$ matrix denoted by $\mathbf{J}_{\mathbf{x}} \mathbf{f}(\mathbf{x})$ as the following:
 $$
 \mathbf{J}_{\mathbf{x}} \mathbf{f}(\mathbf{x})
 =
 \begin{bmatrix}
 \frac{\partial f_1(\mathbf{x})}{\partial x_1}
 &
 \cdots
&
 \frac{\partial f_1(\mathbf{x})}{\partial x_n}
 \\
 \vdots
 &
 \ddots
 &
 \vdots
 \\
 \frac{\partial f_m(\mathbf{x})}{\partial x_1}
 &
 \cdots
&
 \frac{\partial f_m(\mathbf{x})}{\partial x_n}
 \end{bmatrix}
 =
 \begin{bmatrix}
 \big(
\nabla_{\mathbf{x}} f_1(\mathbf{x}) 
\big)^{\top} 
\\
\vdots
\\
\big(
\nabla_{\mathbf{x}}
f_m(\mathbf{x})  
\big)^{\top}
 \end{bmatrix}.
 $$
\end{definition}

\begin{definition}[Gradient of a scalar-valued function]\label{def:gradient}
Let $f:\mathbb{R} \to \mathbb{R}$ be a differentiable scalar-valued function. The gradient of  $\nabla f: \mathbb{R}^n \to \mathbb{R}^n$ at $\mathbf{x} \in \mathbb{R}^n$ is defined as the following:
$$
\nabla f(\mathbf{x})
=
\begin{bmatrix}
 \frac{\partial f(\mathbf{x})}{\partial x_1}
\\
 \vdots
\\
 \frac{\partial f(\mathbf{x})}{\partial x_n}
 \end{bmatrix}.
$$
\end{definition}

\section{Activation functions}
\begin{definition}[Sigmoid function]\label{def:sigmoid}
A sigmoid function $\sigma: \mathbb{R} \to [0,1]$ is defined as $\sigma(x)=\frac{e^x}{1+e^x}$.
\end{definition}

\begin{definition}[Softmax function]\label{def:softmax}
A softmax function $\bm{\sigma}: \mathbb{R}^c \to \mathbb{R}^c$ is defined for $c\geq 3$ as the following:
$$
\bm{\sigma}(\mathbf{x})
=
\begin{bmatrix}
\frac{e^{x_1}}{\sum_{j=1}^{c}e^{x_j}}
\\
\vdots
\\
\frac{e^{x_c}}{\sum_{j=1}^{c}e^{x_j}}
\end{bmatrix}
\in \mathbb{R}^c.
$$
\end{definition}
\section{Loss functions}
\begin{definition}[Binary Cross Entropy Loss]\label{def:BCE}
Let $\hat{y} \in [0,1]$ be a predicted probability for a true label whose value is either zero or one, i.e., $y \in \{0,1\}$. The Binary Cross Entropy (BCE) loss is defined as follows:
$$
\text{BCE}(y,\hat{y})
=
-\Big(y \log(\hat{y}) + (1 - y)\log(1 - \hat{y})
\Big).
$$
\end{definition}
\begin{remark}[BCE loss]
The BCE loss is minimized when the predicted label is close to the true label.
The BCE loss has a smooth and continuous gradient which makes it suitable for use with gradient-based optimization algorithms. It is also a convex function with respect to the variable $\hat{y}$.
\end{remark}
\begin{definition}[Cross Entropy Loss]\label{def:CE}
Let $\hat{\mathbf{y}} \in (0,1)^c \in \mathbb{R}^c$ ($c \geq 3$) be a predicted probability for a true one-hot vector label $\mathbf{y} \in \mathbb{R}^c$, i.e., $y_j=1, y_i=0$ for $j\neq i=1, \dots, c$. The Cross Entropy (CE) loss is defined as follows:
$$
\text{CE}(y,\hat{y})
=
-\sum_{i=1}^{c}y_i\log (\hat{y}_i).
$$
\end{definition}
\begin{remark}
Cross entropy function is commonly used in machine learning and information theory to measure the difference between two probability distributions. It is often used as a loss function to evaluate the performance of classification models.

In the context of machine learning, cross entropy is typically used to measure the difference between the predicted probability distribution (outputted by the model) and the true probability distribution (which represents the actual labels of the data). The cross entropy loss function is designed to penalize the model when it assigns low probabilities to the true labels, and to reward the model when it assigns high probabilities to the true labels.
\end{remark}
\begin{definition}[Squared Error Loss]\label{def:SE}
Let $\hat{y} \in \mathbb{R}$ be a predicted value for a true label whose value is $y \in \mathbb{R}$. The Square Error (SE) loss is defined as follows:
$$
\text{SE}(y,\hat{y})
=
(y-\hat{y})^2.
$$
\end{definition}

\section{Gradient of the loss with respect to $\mathbf{z}^{[L]}$}\label{append:grad}

\subsection{Sigmoid with BCE}\label{subappend:gradBCE}
\begin{lemma}[Gradient of BCE loss]
Let $\bm{x}^{(i)}$ be an input to a one-layer network solving binary classification with $y^{(i)} \in \{0,1\}$ be its true label and $\hat{y}^{(i)}=\sigma
\left(\mathbf{w}^{\top}\bm{x}^{(i)}+b
\right)$ is the predicted probability corresponding to the input for $i=1,\dots,N$.
The gradient of the BCE loss is given as follows:

$$
\nabla_{\bm{\theta}}
\text{BCE}
\left(
y^{(i)}, \sigma
\left(\mathbf{w}^{\top}\bm{x}^{(i)}+b
\right)
\right) 
=
-
\left(
y^{(i)} 
-
\sigma
\left(
\mathbf{w}^{\top}\bm{x}^{(i)}+b
\right)
\right)
\begin{bmatrix}
\bm{x}^{(i)}
\\
1
\end{bmatrix}
$$
for $i=1,\dots,N$.
\end{lemma}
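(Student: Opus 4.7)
The plan is to invoke the chain rule as formalized in the paper's Equation~(\ref{eq:GradientJacobian}), applied to the one-layer setting: the loss $\mathrm{BCE}\bigl(y^{(i)},\sigma(z)\bigr)$ factors through the pre-activation $z=\mathbf{w}^{\top}\bm{x}^{(i)}+b$, so I would write
$$\nabla_{\bm{\theta}}\,\mathrm{BCE}\bigl(y^{(i)},\sigma(z(\bm{\theta}))\bigr)=\bigl(\mathbf{J}_{\bm{\theta}}\,z(\bm{\theta})\bigr)^{\top}\,\nabla_{z}\,\mathrm{BCE}\bigl(y^{(i)},\sigma(z)\bigr),$$
and compute the two factors separately.

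First I would compute the Jacobian of the pre-activation. Since $z(\bm{\theta})=\mathbf{w}^{\top}\bm{x}^{(i)}+b$ is scalar-valued with $\bm{\theta}^{\top}=[\mathbf{w}^{\top},\,b]$, Definition~\ref{def:jacobian} gives directly the row vector $\mathbf{J}_{\bm{\theta}}\,z(\bm{\theta})=\bigl[(\bm{x}^{(i)})^{\top},\,1\bigr]$, so its transpose is the column $[\bm{x}^{(i)};\,1]$ appearing on the right-hand side of the claim.

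Next I would compute the scalar derivative $\nabla_{z}\,\mathrm{BCE}\bigl(y^{(i)},\sigma(z)\bigr)$. Differentiating the expression from Definition~\ref{def:BCE} with $\hat{y}^{(i)}=\sigma(z)$ via the ordinary chain rule yields
$$\frac{d}{dz}\,\mathrm{BCE}\bigl(y^{(i)},\sigma(z)\bigr)=\left(-\frac{y^{(i)}}{\sigma(z)}+\frac{1-y^{(i)}}{1-\sigma(z)}\right)\sigma'(z).$$
At this point I would invoke the standard identity $\sigma'(z)=\sigma(z)\bigl(1-\sigma(z)\bigr)$, which follows in one line from Definition~\ref{def:sigmoid}.

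The main (and only nontrivial) obstacle is the algebraic simplification: I need to verify that the factor $\sigma(z)(1-\sigma(z))$ cancels cleanly against the denominators produced by differentiating $\log\sigma(z)$ and $\log(1-\sigma(z))$, leaving $\sigma(z)-y^{(i)}$ after combining over a common denominator. Granting that cancellation, the scalar $\nabla_{z}\,\mathrm{BCE}$ equals $-\bigl(y^{(i)}-\sigma(\mathbf{w}^{\top}\bm{x}^{(i)}+b)\bigr)$, and multiplying it by the column $[\bm{x}^{(i)};\,1]$ reproduces the stated formula. This also matches the BCE row of Table~\ref{tab:activaionllosscombination} composed with the one-layer Jacobian derived in the body of the paper, providing a consistency check.
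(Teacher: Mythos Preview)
Your proposal is correct and follows essentially the same approach as the paper: both arguments apply the chain rule through the pre-activation, invoke the identity $\sigma'(z)=\sigma(z)(1-\sigma(z))$, and perform the same cancellation of $\sigma(z)$ and $1-\sigma(z)$ against the denominators coming from $\log\sigma(z)$ and $\log(1-\sigma(z))$. The only cosmetic difference is that you factor out $\mathbf{J}_{\bm{\theta}}z=[(\bm{x}^{(i)})^{\top},\,1]$ at the outset via Equation~(\ref{eq:GradientJacobian}), whereas the paper carries the vector $[\bm{x}^{(i)};1]$ through each term by first computing $\nabla_{\bm{\theta}}\sigma(\mathbf{w}^{\top}\bm{x}^{(i)}+b)$ directly; the algebra and the result are identical.
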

\begin{proof}
We calculate the following for a fixed $i \in \{1, \dots, N\}$:
$$
\begin{aligned}   
\nabla_{\bm{\theta}}
\Bigg(
\text{BCE}
\left(
y^{(i)}, \sigma
\left(\mathbf{w}^{\top}\bm{x}^{(i)}+b
\right)
\right)
\Bigg)
&= -
 \nabla_{\bm{\theta}}
 \Bigg(
 y^{(i)} \log
 \left(\sigma
\left(\mathbf{w}^{\top}\bm{x}^{(i)}+b
\right)
\right) 
\\ &+ (1 - y^{(i)})\log
 \left(
 1 - \sigma
\left(\mathbf{w}^{\top}\bm{x}^{(i)}+b
\right)
\right)
\Bigg) 
.
\end{aligned}
$$
To calculate the above observe the following:
$$
\begin{aligned}
\nabla_{\bm{\theta}}
 \left(
 \sigma
\left(\mathbf{w}^{\top}\bm{x}^{(i)}+b
\right)
\right)
=
\begin{bmatrix}
\bm{x}^{(i)}
\\
1
\end{bmatrix}\sigma
\left(\mathbf{w}^{\top}\bm{x}^{(i)}+b
\right)
 \left(
 1 - \sigma
\left(\mathbf{w}^{\top}\bm{x}^{(i)}+b
\right)
\right).
\end{aligned}
$$
Hence, we can write the following:
$$
\begin{aligned}
\nabla_{\bm{\theta}}
\ell
\big(
\mathbf{y}, \hat{\mathbf{y}}
(\bm{\theta})
\big)
&=
\nabla_{\bm{\theta}}
\Bigg(
\text{BCE}
\left(
y^{(i)}, \sigma
\left(\mathbf{w}^{\top}\bm{x}^{(i)}+b
\right)
\right)
\Bigg)
\\ 
&=-\Bigg(
y^{(i)} 
\frac{
\sigma
\left(
\mathbf{w}^{\top}\bm{x}^{(i)}+b
\right)
\left(
 1 - \sigma
\left(\mathbf{w}^{\top}\bm{x}^{(i)}+b
\right)
\right)
}{\sigma
\left(\mathbf{w}^{\top}\bm{x}^{(i)}+b
\right)}
\begin{bmatrix}
\bm{x}^{(i)}
\\
1
\end{bmatrix}
\\
 &- 
 (1 - y^{(i)})
 \frac{
\sigma
\left(
\mathbf{w}^{\top}\bm{x}^{(i)}+b
\right)
\left(
 1 - \sigma
\left(\mathbf{w}^{\top}\bm{x}^{(i)}+b
\right)
\right)
}{
1-\sigma
\left(\mathbf{w}^{\top}\bm{x}^{(i)}+b
\right)}
\begin{bmatrix}
\bm{x}^{(i)}
\\
1
\end{bmatrix}
\Bigg) 
\\
&=
-
y^{(i)} 
\left(
 1 - \sigma
\left(\mathbf{w}^{\top}\bm{x}^{(i)}+b
\right)
\right)
\begin{bmatrix}
\bm{x}^{(i)}
\\
1
\end{bmatrix}
\\
&+
 (1 - y^{(i)})
\sigma
\left(
\mathbf{w}^{\top}\bm{x}^{(i)}+b
\right)
\begin{bmatrix}
\bm{x}^{(i)}
\\
1
\end{bmatrix}
\\
&=
-
\left(
y^{(i)} 
-
\sigma
\left(
\mathbf{w}^{\top}\bm{x}^{(i)}+b
\right)
\right)\begin{bmatrix}
\bm{x}^{(i)}
\\
1
\end{bmatrix}
\end{aligned}
$$
\end{proof}

\subsection{Softamx with CE}\label{subappend:gradCE}
\begin{lemma}[Gradient of CE loss w.r.t $\mathbf{z}$]
Let $\hat{\mathbf{y}} \in (0,1)^c \in \mathbb{R}^c$ ($c \geq 3$) be a predicted probability such that $\hat{\mathbf{y}}=\bm{\sigma}(\mathbf{z})$ where $\bm{\sigma}$ is a softmax function as defined in Def. \ref{def:softmax} and $\mathbf{z} \in \mathbb{R}^n$. And, let $\mathbf{y}$ be a true one-hot vector label $\mathbf{y} \in \mathbb{R}^c$, i.e., $y_j=1, y_i=0$ for $j\neq i=1, \dots, c$. 
The gradient of the CE loss with respect to $\mathbf{z}$ is the following:
$$
\nabla_{\mathbf{z}}
\ell
\left(
\mathbf{y},
\hat{\mathbf{y}}(\mathbf{z})
\right)
=
\nabla_{\mathbf{z}}
\text{CE}
\left(
\mathbf{y},
\hat{\mathbf{y}}(\mathbf{z})
\right)
=
-(\mathbf{y} - \hat{\mathbf{y}}).
$$
\begin{proof}
To calculate $\nabla_{\mathbf{z}}
\text{CE}
\left(
\mathbf{y},
\hat{\mathbf{y}}(\mathbf{z})
\right)$ one can rewrite it as $\nabla_{\mathbf{z}}
\text{CE}
\left(
\mathbf{y},
\bm{\sigma}(\mathbf{z})
\right)$. Then the loss can be expanded as follows:
$$
\text{CE}
\left(
\mathbf{y},
\bm{\sigma}(\mathbf{z})
\right)
=
-
\sum_{i=1}^c
y_i
\log 
\left(
\frac{e^{z_i}}{\sum_{j=1}^c e^{z_j}}
\right)
$$

each component can be calculated as the following:  
$$
\begin{aligned}
\frac{\partial}{\partial z_j}
\left(
-
\sum_{i=1}^c
y_i
\left(
z_i-\log (\sum_{j=1}^c e^{z_j})
\right)
\right) 
&=
\frac{\partial}{\partial z_j}
\left(
-
\sum_{i=1}^c(y_iz_i)
+
\sum_{i=1}^c
y_i
\left(
\log (\sum_{j=1}^c e^{z_j})
\right)
\right)
\\
&=
-y_j+
\sum_{i=1}^c
y_i
\frac{e^{z_j}}{\sum_{j=1}^c e^{z_j}}
\\
&=
-y_j+
\sum_{i=1}^c
y_i
(\bm{\sigma}(\mathbf{z}))_j
\\
&=
-y_j+
(\bm{\sigma}(\mathbf{z}))_j
\sum_{i=1}^c
y_i
\\
&=
-y_j+
(\bm{\sigma}(\mathbf{z}))_j
\end{aligned}
$$
where in the last equality we have use the fact that $\sum_{i=1}^c
y_i=1$. Since the above calculation is for the $j$-th component and $(\hat{\mathbf{y}})_j=(\bm{\sigma}(\mathbf{z}))_j$, by calculating other components we get the desired result.
\end{proof}

\end{lemma}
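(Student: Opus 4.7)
The plan is to compute the gradient componentwise and then repackage the result as a vector identity. First I would substitute the softmax definition into the CE expression so the loss becomes an explicit function of $\mathbf{z}$, namely $\text{CE}(\mathbf{y},\bm{\sigma}(\mathbf{z})) = -\sum_{i=1}^c y_i \log\!\left(e^{z_i}/S(\mathbf{z})\right)$, where $S(\mathbf{z}):=\sum_{k=1}^c e^{z_k}$. Using the identity $\log(e^{z_i}/S) = z_i - \log S$, this splits additively into a linear piece $-\sum_i y_i z_i$ and a log-sum-exp piece $\left(\sum_i y_i\right)\log S(\mathbf{z})$, which makes each differentiation elementary.

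Next I would fix an index $j\in\{1,\dots,c\}$ and compute $\partial/\partial z_j$ of each piece. The linear piece contributes $-y_j$ immediately. For the log-sum-exp piece, the chain rule gives $\partial_j \log S(\mathbf{z}) = e^{z_j}/S(\mathbf{z})$, which by Definition \ref{def:softmax} is precisely $\bigl(\bm{\sigma}(\mathbf{z})\bigr)_j = \hat{y}_j$. Hence the $j$-th partial is $-y_j + \left(\sum_i y_i\right)\hat{y}_j$.

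The key simplification is to invoke the one-hot assumption on $\mathbf{y}$, which forces $\sum_{i=1}^c y_i = 1$, so the $j$-th component collapses to $-y_j + \hat{y}_j = -(y_j - \hat{y}_j)$. Stacking across $j=1,\dots,c$ yields the desired vector identity $\nabla_{\mathbf{z}}\,\text{CE}(\mathbf{y},\hat{\mathbf{y}}(\mathbf{z})) = -(\mathbf{y} - \hat{\mathbf{y}})$, and the first equality in the statement is trivial since $\ell = \text{CE}$ by choice.

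The main obstacle I anticipate is purely bookkeeping: the dummy index $i$ in the outer sum and the dummy index $k$ in the softmax denominator both collide with the differentiation index $j$, so one must be scrupulous about which summations survive. The step where $\sum_i y_i = 1$ enters is what makes the answer so clean; without the one-hot hypothesis the factor would not disappear, and the final formula would carry an extra $(\sum_i y_i)$ multiplying $\hat{\mathbf{y}}$. Everything else is a direct derivative computation and needs no appeal to the Jacobian–gradient relationship beyond noting that the result is the gradient of a scalar function.
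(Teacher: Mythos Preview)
Your proposal is correct and follows essentially the same route as the paper: both expand the CE loss via $\log(e^{z_i}/S)=z_i-\log S$, differentiate the linear and log-sum-exp pieces componentwise to obtain $-y_j + \bigl(\sum_i y_i\bigr)\hat{y}_j$, invoke $\sum_i y_i=1$ from the one-hot assumption, and then stack the components. Your remark about index bookkeeping is apt but not a genuine obstacle; the paper handles it the same way.
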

\subsection{Square Error}\label{subappend:gradMSE}
\begin{lemma}
Let $\bm{x}^{(i)}$ be an input to a one-layer network with 
$y^{(i)}\in \mathbb{R}$ be the corresponding true value and
$$ \hat{y}^{(i)} =\mathbf{w}^{\top}\mathbf{x}^{(i)}+ b \quad \hat{y}^{(i)} in \mathbb{R} $$ be the corresponding prediction  for $i=1,\dots,N$. The gradient of SE loss is defined as follows:
$$
\begin{aligned}
\nabla_{\bm{\theta}}
\text{SE}(y^{(i)},\mathbf{w}^{\top}\mathbf{x}^{(i)}+ b)
&=
\nabla_{\bm{\theta}}
(y^{(i)}-(\mathbf{w}^{\top}\mathbf{x}^{(i)}+ b))^2
\\
&=
-
2\begin{bmatrix}
\mathbf{x}^{(i)}\\1    \end{bmatrix}
(y^{(i)}-\hat{y}^{(i)})    
\end{aligned}
$$
for $i=1,\dots,N$.
\end{lemma}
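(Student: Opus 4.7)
The plan is to reduce the claim to a direct application of the chain rule, since in this setting $\hat{y}^{(i)}=\mathbf{w}^{\top}\mathbf{x}^{(i)}+b$ is scalar-valued and affine in the parameter vector $\bm{\theta}=\begin{bmatrix}\mathbf{w}\\ b\end{bmatrix}$. First I would write $\text{SE}(y^{(i)},\hat{y}^{(i)}(\bm{\theta}))=\big(y^{(i)}-\hat{y}^{(i)}(\bm{\theta})\big)^{2}$ as the composition of the outer map $r\mapsto (y^{(i)}-r)^{2}$ (with scalar derivative $-2(y^{(i)}-r)$) and the inner affine map $\bm{\theta}\mapsto \mathbf{w}^{\top}\mathbf{x}^{(i)}+b$.

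Next I would compute the gradient of the inner map. Because $\mathbf{w}^{\top}\mathbf{x}^{(i)}+b=\begin{bmatrix}(\mathbf{x}^{(i)})^{\top} & 1\end{bmatrix}\bm{\theta}$, we immediately get $\nabla_{\bm{\theta}}\hat{y}^{(i)}=\begin{bmatrix}\mathbf{x}^{(i)}\\ 1\end{bmatrix}$. Combining with the derivative of the outer map via the scalar chain rule yields
$$
\nabla_{\bm{\theta}}\big(y^{(i)}-\hat{y}^{(i)}(\bm{\theta})\big)^{2}
=-2\big(y^{(i)}-\hat{y}^{(i)}\big)\begin{bmatrix}\mathbf{x}^{(i)}\\ 1\end{bmatrix},
$$
which is the stated identity.

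As a sanity check, I would reconcile this with the Jacobian framework built earlier: the last activation is the identity, so from Table~\ref{tab:activaionllosscombination} we have $\nabla_{\mathbf{z}^{[L]}}\ell=-2(y^{(i)}-\hat{y}^{(i)})$ (here scalar-valued), while the one-layer derivation gives $\mathbf{J}_{\bm{\theta}}\hat{y}^{(i)}=\begin{bmatrix}(\mathbf{x}^{(i)})^{\top} & 1\end{bmatrix}$. The product $(\mathbf{J}_{\bm{\theta}}\hat{y}^{(i)})^{\top}\nabla_{\mathbf{z}^{[L]}}\ell$ reproduces the formula, confirming consistency with Eq.~(\ref{eq:GradientJacobian}).

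There is no real obstacle here; the only thing to be careful about is the orientation of the vector $\begin{bmatrix}\mathbf{x}^{(i)}\\ 1\end{bmatrix}$ (column rather than row), so that the result is a column gradient of the correct dimension $n+1$, matching the ordering convention $\bm{\theta}=\begin{bmatrix}\mathbf{w}\\ b\end{bmatrix}$ used throughout the paper.
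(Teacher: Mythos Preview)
Your proposal is correct and follows essentially the same route as the paper: define $\bm{\theta}=[\mathbf{w}^{\top}\ b]^{\top}$, apply the scalar chain rule to $(y^{(i)}-\hat{y}^{(i)})^{2}$, and substitute $\nabla_{\bm{\theta}}\hat{y}^{(i)}=\begin{bmatrix}\mathbf{x}^{(i)}\\1\end{bmatrix}$. The paper's proof is simply a terser version of your first two paragraphs; your additional Jacobian sanity check is a nice consistency verification but is not part of the paper's argument.
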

\begin{proof}
Let $\bm{\theta}:=[\mathbf{w}^{\top} b]^{\top}$.
Then,
$$
\nabla_{\bm{\theta}}
(y^{(i)}-\hat{y}^{(i)})^2
=
-
2(y^{(i)}-\hat{y}^{(i)})
\nabla_{\bm{\theta}}
\hat{y}^{(i)}
=
-
2\begin{bmatrix}
\mathbf{x}^{(i)}\\1    
\end{bmatrix}
(y^{(i)}-\hat{y}^{(i)})
.$$
\end{proof}

\end{document}